\definecolor{blue-violet}{rgb}{0.14, 0.17, 0.65}
\renewcommand{\a}{\mathrm{a}}
\renewcommand{\b}{\mathrm{b}}
\newcommand{\x}{\mathrm{x}}
\newcommand{\h}{\mathrm{h}}
\newcommand{\y}{\mathrm{y}}
\renewcommand{\u}{\mathrm{u}}
\renewcommand{\v}{\mathrm{v}}
\newcommand{\w}{\mathrm{w}}
\newcommand{\z}{\mathrm{z}}
\newcommand{\0}{\mathrm{0}}
\newcommand{\I}{\mathrm{I}}
\newcommand{\B}{\mathrm{B}}
\newcommand{\U}{\mathrm{U}}
\newcommand{\V}{\mathrm{V}}
\newcommand{\X}{\mathrm{X}}
\newcommand{\Y}{\mathrm{Y}}
\newcommand{\W}{\mathrm{W}}
\newcommand{\bb}{\mathbb}
\newcommand{\R}{\bb R}
\newcommand{\bI}{\bb I}
\newcommand{\N}{\bb N}
\newcommand{\E}{\bb E }
\newcommand{\cB}{\mathcal{B}}
\newcommand{\cD}{\mathcal{D}}
\newcommand{\cN}{\mathcal{N}}
\newcommand{\cS}{\mathcal{S}}
\newcommand{\cO}{\mathcal{O}}
\newcommand{\cR}{\mathcal{R}}
\newcommand{\cX}{\mathcal{X}}
\newcommand{\cY}{\mathcal{Y}}
\newcommand{\cW}{\mathcal{W}}
\newcommand{\trace}{\operatorname{Tr}}
\newcommand{\prob}{\mathbb{P}}
\newcommand{\abs}[1]{\left|#1\right|}
\newcommand{\relu}[1]{\sigma\left(#1\right)}
\renewcommand{\ln}[1]{\operatorname{ln}\left(#1\right)}
\theoremstyle{definition}
\newtheorem{theorem}{Theorem}[section]
\newtheorem{lemma}[theorem]{Lemma}
\newtheorem{assumption}{Assumption}
\title{On Convergence and Generalization\\ of Dropout Training}
\author{
  Poorya Mianjy\\
  Department of Computer Science\\
  Johns Hopkins University\\
  \texttt{mianjy@jhu.edu} 
  \And 
  Raman Arora\\
  Department of Computer Science\\
  Johns Hopkins University\\
  \texttt{arora@cs.jhu.edu}
}
\begin{document}

\maketitle

\begin{abstract}
We study dropout in two-layer neural networks with rectified linear unit (ReLU) activations. Under mild overparametrization and assuming that the limiting kernel can separate the data distribution with a positive margin, we show that dropout training with logistic loss achieves $\epsilon$-suboptimality in test error in $O(1/\epsilon)$ iterations.
\end{abstract}

\section{Introduction}\label{sec:intro}

Dropout is an algorithmic regularization approach that endows deep learning models with excellent generalization ability despite the non-convex nature of the underlying learning problem and the capacity of modern over-parameterized models to over-fit. Introduced by~\cite{hinton2012improving,srivastava2014dropout}, dropout involves randomly pruning the network at every iteration of backpropagation by turning off a random subset of hidden nodes. Like many popular algorithmic approaches that emerged as heuristics from practitioners with deep insight into the learning problem, dropout, while extremely successful in practice, lacks a strong theoretical justification, especially from a computational learning theoretic perspective. 

Dropout has been successful in several application areas including computer vision~\citep{szegedy2015going}, natural language processing~\citep{merity2017regularizing}, and speech recognition~\citep{dahl2013improving}. Motivated by explaining the empirical success of dropout, and inspired by simple, intuitive nature of the method, numerous works in recent years have focused on understanding its theoretical underpinnings~\citep{baldi2013understanding,wager2013dropout,helmbold2015inductive,gal2016dropout,wei2020implicit}. Most of these works, however, steer clear from the algorithmic and computational learning aspects of dropout. More precisely, none of the prior work, to the best of our knowledge, yields insights into the runtime of learning using dropout on non-linear neural networks. In this paper, we initiate a study into the iteration complexity of dropout training for achieving $\epsilon$-suboptimality on true error -- the misclassification error with respect to the underlying population -- in two-layer neural networks with ReLU activations.

We leverage recent advances in the theory of deep learning in over-parameterized settings with extremely (or infinitely) wide networks~\citep{jacot2018neural,lee2019wide}. Analyzing two-layer ReLU networks in such a regime has led to a series of exciting results recently establishing that gradient descent (GD) or stochastic gradient descent (SGD) can successfully minimize the empirical error and the true error~\citep{li2018learning,du2018gradient,daniely2017sgd,zou2018stochastic,allen2019learning,song2019quadratic,arora2019fine,cao2019generalization,oymak2020towards}. In a related line of research, several works attribute generalization in over-parametrized settings to the implicit inductive bias of optimization algorithms (through the geometry of local search methods)~\citep{neyshabur2017geometry}. However, many real-world state-of-the-art systems employ various explicit regularizers, and there is growing evidence that implicit bias may be unable to explain generalization even in a simpler setting of stochastic convex optimization~\citep{dauber2020can}. Here, we extend convergence guarantees and generalization bounds for GD-based methods with explicit regularization due to dropout. We show that the key insights from analysis of GD-based methods in over-parameterized settings carry over~to~dropout~training.

We summarize our key contributions as follows. 
\begin{enumerate}
\item We give precise non-asymptotic convergence rates for achieving $\epsilon$-subotimality in the test error via dropout training in two-layer ReLU networks.
\item We show that dropout training implicitly compresses the network. In particular, we show that there exists a sub-network, i.e., one of the iterates of dropout training, that can generalize as well as any complete network. 
\item This work contributes to a growing body of work geared toward a theoretical understanding of GD-based methods for regularized risk minimization in over-parameterized settings.
\end{enumerate}
 
The rest of the paper is organized as follows. In Section~\ref{sec:related}, we survey the related work. In Section~\ref{sec:preliminaries}, we formally introduce the problem setup and dropout training, state the key assumptions, and introduce the notation. In Section~\ref{sec:results}, we give the main results of the paper. In Section~\ref{sec:proofs}, we present a sketch of the proofs of our main results -- the detailed proofs are deferred to the Appendix. We conclude the paper by providing empirical evidence for our theoretical results in Section~\ref{sec:exp}.
\section{Related Work}\label{sec:related}

Empirical success of dropout has inspired a series of works aimed at understanding its theoretical underpinnings. Most of these works have focused on explaining the algorithmic regularization due to dropout in terms of \emph{conventional} regularizers. Dropout training has been shown to be similar to \emph{weight decay} in linear regression~\citep{srivastava2014dropout}, in generalized linear models~\citep{wager2013dropout},  and in a PAC-Bayes setting~\citep{mcallester2013pac}.  These results have recently been extended to multi-variate regression models, {where dropout induces a \emph{nuclear norm} penalty in single hidden-layer linear networks~\citep{mianjy2018implicit} and deep linear networks~\citep{mianjy2019dropout}, and a \emph{weighted trace-norm} penalty in matrix completion~\citep{arora2020dropout}. }
In a recent work, \citet{wei2020implicit} characterize explicit regularization due to dropout in terms of the derivatives of the loss and the model, and argue for an implicit regularization effect that stems from the stochasticity in dropout updates.

A parallel strand of research has focused on bounding the generalization gap in dropout training, leveraging tools from uniform convergence. The early work of~\citet{wager2014altitude} showed that under a certain topic model assumption on the data, dropout in linear classification can improve the decay of the excess risk of the empirical risk minimizer. Assuming certain norm-bounds on the weights of the network, the works of \citet{wan2013regularization,zhai2018adaptive,gao2016dropout} showed that the Rademacher complexity of networks trained by dropout decreases with the dropout rate. Finally, \citet{arora2020dropout} showed further that the Rademacher complexity can be bounded merely in terms of the explicit regularizer induced by dropout.

Despite the crucial insights provided by the previous art, there is not much known about the non-asymptotic convergence behaviour of dropout training in the literature. A very recent work by~\cite{senen2020almost} shows for deep neural networks with polynomially bounded activations with continuous derivatives, under squared loss, that the network weights converge to a stationary set of system of ODEs. 
In contrast, our results  leverages over-parameterization in two-layer networks with non-differentiable ReLU activations, works with logistic loss, and establishes $\epsilon$-suboptimality in the true misclassification error.

Our results are inspired by the recent advances in over-parameterized settings. A large body of literature has focused on deriving optimization theoretic guarantees for (S)GD in this setting. In particular,~\cite{li2018learning,du2018gradient} were among the first  to provide convergence rates for  empirical risk minimization using GD. Several subsequent works extended those results beyond two-layers, for smooth activation functions~\citep{du2018gradient1}, and general activation functions~\citep{allen2018convergence,zou2018stochastic} .

Learning theoretic aspects of GD-based methods have been studied for several important target concept classes. Under linear-separability assumption, via a compression scheme,  \citet{brutzkus2017sgd} showed that SGD can efficiently learn a two-layer ReLU network. \citet{li2018learning} further showed that SGD enjoys small generalization error on two-layer ReLU networks if the data follows a well-separated mixture of distributions. \citet{allen2019learning} showed generalization error bounds for SGD in two- and three-layer networks with smooth activations where the concept class has fewer parameters. \citet{arora2019fine} proved data-dependent generalization error bounds based on the neural tangent kernel by analyzing the Rademacher complexity of the class of networks~reachable~by~GD. 

When the data distribution can be well-classified in the \emph{random feature space} induced by the gradient of the network at initialization,~\cite{cao2019generalization} provide generalization guarantees for SGD in networks with arbitrary depth. \cite{nitanda2019refined} studied convergence of GD in two-layer networks with smooth activations, when the data distribution is further \emph{separable} in the infinite-width limit of the random feature space. \cite{ji2019polylogarithmic} adopted the same margin assumption and improved the convergence rate as well as the over-parameterization size for non-smooth ReLU activations. Here, we generalize the margin assumption in~\cite{nitanda2019refined} to take into account the randomness injected by dropout into the gradient of the network at initialization, or equivalently, the scale of the corresponding random feature. Our work is most closely related to and inspired by~\cite{ji2019polylogarithmic}; however, we analyze dropout training as opposed to plain SGD, give generalization bounds in expectation, and show the compression benefits of dropout training.

We emphasize that all of the results above focus on (S)GD in absence of any explicit regularization. We summarize a few papers that study regularization in the over-parameterized setting. The work of~\cite{wei2019regularization} showed that even simple explicit regularizers such as weight decay can indeed provably improve the sample complexity of training using GD in the Neural Tangent Kernel (NTK) regime, appealing to a margin-maximization argument in homogeneous networks. We also note the recent works by~\cite{li2019gradient} and~\cite{Hu2020Simple}, which studied the robustness of GD to noisy labels, with explicit regularization in forms of early stopping; and squared norm of the distance from initialization, respectively.

\section{Preliminaries and Notation}\label{sec:preliminaries}
Let $\cX\subseteq \R^{d}$ and $\cY = \{\pm 1 \}$ denote the input and label spaces, respectively. We assume that the data is jointly distributed according to an unknown distribution $\cD$ on $\cX \times \cY$. Given $T$ i.i.d. examples $\cS_T=\{ (\x_t,\y_t)\}_{t=1}^T \sim \cD^T$, the goal of learning is to find a hypothesis $f(\cdot;\Theta):\cX\to\R$, parameterized by $\Theta$, that has a small \emph{misclassification error} $\cR(\Theta):= \prob\{y f(\x;\Theta) < 0 \}$. Given a convex surrogate loss function $\ell: \R \to \R_{\geq 0}$, a common approach to the above learning problem is to solve the stochastic optimization problem $\min_\Theta L(\Theta) := \E_\cD[\ell(y f(\x;\Theta))]$. 

In this paper, we focus on logistic loss $\ell(z)=\log(1+e^{-z})$, which is one of the most popular loss functions for classification tasks. We consider two-layer ReLU networks of width $m$, parameterized by the ``weights'' $\Theta=(\W,\a)\in \R^{m\times d} \times \R^m$, computing the function $f(\cdot;\Theta) :\x \mapsto \frac1{\sqrt m}\a^\top\relu{\W \x}$. We initialize the network with $a_r\sim\text{Unif}(\{+1,-1\})$ and $\w_{r,1}\sim\cN(0,\I)$, for all hidden nodes $r\in[m]$. We then fix the top layer weights and train the hidden layer $\W$ using the dropout algorithm. {We denote the weight matrix at time $t$ by $\W_t$, and $\w_{r,t}$ represents its $r$-th column.} For the sake of simplicity of the presentation, we drop non-trainable arguments from all functions, e.g., we use $f(\cdot;\W)$ in lieu of~$f(\cdot;\Theta)$. 

Let $\B_t \in \R^{m \times m}, \ t \in [T]$, be a random diagonal matrix with diagonal entries drawn independently and identically from a Bernoulli distribution with parameter $q$, i.e., $b_{r,t}\sim \text{Bern}(q)$, where $b_{r,t}$ is the $r$-th diagonal entry of $\B_t$. At the $t$-th iterate, dropout entails a SGD step on (the parameters of) the sub-network $g(\W;\x,\B_t) = \frac{1}{\sqrt{m}}\a^\top\B_t \sigma(\W\x)$, yielding updates of the form $\W_{t+\frac12}\gets \W_t - \eta \nabla \ell(y_t g(\W_t;\x_t,\B_t))$. The iteration concludes with projecting the incoming weights -- i.e. rows of $\W_{t+\frac12}$ -- onto a pre-specified Euclidean norm ball. We note that such max-norm constraints are standard in the practice of deep learning, and has been a staple to dropout training since it was proposed in~\cite{srivastava2014dropout}\footnote{Quote from~\cite{srivastava2014dropout}: ``One particular form of regularization was found to be especially useful for dropout—
constraining the norm of the incoming weight vector at each hidden unit to be upper
bounded by a fixed constant $c$''}. Finally, at test time, the weights are multiplied by $q$ so as to make sure that the output at test time is on par with the expected output at training time. The pseudo-code for dropout training is given in Algorithm~\ref{alg:dropout}\footnote{In a popular variant that is used in machine learning frameworks such as PyTorch, known as inverted dropout, (inverse) scaling is applied at the training time instead of the test time. The inverted dropout is equivalent to the method we study here, and can be analyzed in a similar manner.
}.

\begin{algorithm}[t!]
\caption{\label{alg:dropout}Dropout in Two-Layer Networks}
\begin{algorithmic}[1]
\REQUIRE data $\cS_T=\{(\x_t,y_t)\}_{t=1}^{T}\sim \cD^T$; Bernoulli masks $\cB_T=\{ \B_t \}_{t=1}^{T}$; \\ dropout rate $1-q$; max-norm constraint parameter $c$; learning rate $\eta$
\STATE \emph{initialize:} $\w_{r,1} \sim \cN(\0, \I)$ and  $a_r\sim \text{Unif}(\{+1,-1\})$, $\ r\in [m]$
\FOR{$t = 1,\dots,T-1$}
\STATE \emph{forward:} $g(\W_t;\x_t,\B_t) = \frac{1}{\sqrt{m}}\a^\top \B_t \sigma(\W_t \x_t)$
\STATE \emph{backward:} $\nabla L_t(\W_t) = \nabla  \ell(y_t g(\W_t;\x_t,\B_t) = \ell'(y_t g(\W_t;\x_t,\B_t)) \cdot y_t \nabla g(\W_t;\x_t,\B_t)$
\STATE \emph{update:} $\W_{t+\frac{1}{2}} \gets \W_{t} - \eta \nabla L_t(\W_t)$
\STATE \emph{max-norm:} $\W_{t+1} \gets \Pi_c(\W_{t+\frac12})$
\ENDFOR
\ENSURE re-scale the weights as $\W_t \gets q \W_t$
\end{algorithmic}
\end{algorithm}

Our analysis is motivated by recent developments in understanding the dynamics of (S)GD in the so-called \emph{lazy regime}. Under certain initialization, learning rate, and network width requirements, these results show that the iterates of (S)GD tend to stay close to initialization; therefore, a first-order Taylor expansion of the $t$-th iterate around initialization, i.e. $f(\x;\W_t) \approx f(\x;\W_1) + \langle \nabla f(\x; \W_1), \W_t - \W_1\rangle$, can be used as a proxy to track the evolution of the network predictions~\citep{li2018learning,chizat2018lazy,du2018gradient,lee2019wide}. In other words, training in lazy regime reduces to finding a linear predictor in the reproducing kernel Hilbert space (RKHS) associated with the gradient of the network at initialization, $\nabla f(\cdot;\W_1)$. In this work, following~\cite{nitanda2019refined,ji2019polylogarithmic}, we assume that the data distribution is separable by a positive margin in the limiting RKHS: 
\begin{assumption}[$(q,\gamma)$-Margin]\label{ass:margin}
Let $\z\sim \cN(\0,\I_d)$ and $b \sim \text{Bern}(q)$ be a $d$-dimensional standard normal random vector, and a Bernoulli random variable with parameter $q$, respectively. There exists a \emph{margin parameter} $\gamma > 0$, and a linear transformation $\psi : \R^d \to \R^d$ satisfying A) $\E_\z[\|\psi(\z)\|^2] < \infty;$ B) $\|\psi(\z)\|_2\leq 1$ for all $\z\in \R^d;$ and C) $\E_{\z, b} [y \langle\psi(\z), b\x \bI[\z^\top \x \geq 0]\rangle] \geq \gamma$ for almost all $(\x,y) \sim \cD$.
\end{assumption}
The above assumption provides an infinite-width extension to the separability of data in the RKHS induced by $\nabla g(\W_1;\cdot,\B_1)$. To see that, define $\V := [\v_1, \ldots, \v_m]^\top \in \R^{m \times d}$, where $\v_r = \frac{1}{\sqrt{m}}a_r\psi(\w_{r,1})$ for all $r \in [m]$, satisfying $\|\V\|_F\leq 1$. For any given point $(\x,y)\in\cX\times \cY$, the margin attained by $\V$ is at least $y \langle \nabla g(\W_1;\x, \B_1), \V\rangle = \frac{1}{m}\sum_{r=1}^{m}  y \langle  \psi(\w_{r,1}), b_{r,1}\x \bI\{ \w_{r,1}^\top \x >0 \} \rangle$, which is a finite-width approximation of the quantity $\E [y \langle \psi(\z), b\x  \bI\{\z^\top\x > 0\}\rangle]$ in Assumption~\ref{ass:margin}.

We remark that when $q = 1$ (pure SGD -- no dropout), with probability one it holds that $b =1$, so that Assumption~\ref{ass:margin} boils down to that of~\cite{nitanda2019refined} and~\cite{ji2019polylogarithmic}. When $q < 1$, this assumption translates to a margin of $\gamma/q$ on the \emph{full} features $\nabla f(\cdot;\W_1)$, which is the appropriate scaling given that $\nabla f(\cdot; \W_1) = \frac{1}{q}\E_\B[\nabla g(\W_1;\cdot,\B)]$. Alternatively, dropout training eventually outputs a network with weights scaled down as $q \W_t$, which (in expectation) corresponds to the shrinkage caused by the Bernoulli mask in $b \x \bI\{\z^\top\x > 0\}$. Regardless, we note that \emph{our analysis can be carried over even without this scaling}; however, new polynomial factors of $1/q$ will be introduced in the required  width in our results in Section~\ref{sec:results}. 

\subsection{Notation}
We denote matrices, vectors, scalar variables and sets by Roman capital letters, Roman small letters, small letters, and script letters, respectively (e.g. $\Y$, $\y$, $y$, and $\cY$). The $r$-th entry of vector $\y$, and the $r$-th row of matrix $\Y$, are denoted by $y_i$ and $\y_i$, respectively. Furthermore, for a sequence of matrices $\W_t, t \in \N$, the $r$-th row of the $t$-th matrix is denoted by $\w_{r,t}$. Let $\bI$ denote the indicator of an event, i.e., $\bI\{y\in \cY\}$ is one if $y\in \cY$, and zero otherwise. For any integer $d$, we represent the set $\{ 1,\ldots,d \}$ by $[d]$. Let $\| \x \|$ represent the $\ell_2$-norm of vector $\x$; and $\| \X \|$, and $\| \X \|_F$ represent the operator norm, and the Frobenius norm of matrix $\X$, respectively. $\langle \cdot, \cdot \rangle$ represents the standard inner product, for vectors or matrices, where $\langle \X,\X' \rangle = \trace(\X^\top \X')$. For a matrix $\W\in \R^{m\times d}$, and a scalar $c > 0$, $\Pi_c(\W)$ projects the rows of $\W$ onto the Euclidean ball of radius $c$ with respect to the $\ell_2$-norm.

For any $t\in[T]$ and any $\W$, let $f_t(\W):=f(\x_t;\W)$ denote the network output given input $\x_t$, and let $g_t(\W):=g(\W;\x_t,\B_t)$ denote the corresponding output of the sub-network associated with the dropout pattern $\B_t$. Let $L_t(\W)=\ell(y_t g_t(\W))$ and $Q_t(\W)=-\ell'(y_t g_t(\W))$ be the associated instantaneous loss and its negative derivative. The partial derivative of $g_t$ with respect to the $r$-th hidden weight vector is given by $\frac{\partial g_t(\W)}{\partial \w_r}=\frac{1}{\sqrt{m}}a_r b_{r,t} \bI\{\w_r^\top\x_t\geq 0\}\x_t$. We denote the linearization of $g_t(\W)$ based on features at time $t$ by $g_t^{(k)}(\W):=\langle \nabla g_t(\W_k), \W  \rangle$; and its corresponding instantaneous loss and its negative derivative by $ L^{(k)}_t(\W):=\ell(y_t g_t^{(k)}(\W))$ and $Q^{(k)}_t(\W):=-\ell'(y_t g_t^{(k)}(\W))$, respectively. $Q$ plays an important role in deriving generalization bounds for dropout sub-networks $g(\W_t;\x,\B_t)$; it has been recently used in~\citep{cao2019generalization,ji2019polylogarithmic} for analyzing the convergence of SGD and bounding its generalization error.

We conclude this section by listing a few useful identities that are used throughout the paper. First, due to homogeneity of the ReLU, it holds that $g^{(t)}_t(\W_t) = \langle \nabla g_t(\W_t), \W_t \rangle = g_t(\W_t)$. Moreover, the norm of the network gradient, and the norm of the the gradient of the instantaneous loss can be upper-bounded as $\|\nabla g_t(\W)\|_F^2 = \sum_{r=1}^{m} \| \frac{\partial g_t(\W)}{\partial \w_r} \|^2 \leq \frac{\|\B_t\|_F^2}{m} \leq 1,$ and $\|\nabla  L_t(\W)\|_F =  -\ell'(y_t g_t(\W)) \| y_t \nabla g_t(\W)\|_F \leq Q_t(\W),$ respectively. Finally, the logistic loss satisfies $|\ell'(z)| \leq \ell(z)$, so that $Q_t(\W)\leq  L_t(\W)$.

\section{Main Results}\label{sec:results}
We begin with a simple observation that given the random initialization scheme in Algorithm~\ref{alg:dropout}, the $\ell_2$-norm of the rows of $\W_1$ are expected to be concentrated around $\sqrt{d}$. In fact, using Gaussian concentration inequality (Theorem~\ref{thm:gaussian_concentration} in the appendix), it holds with probability at least $1-1/m$, uniformly for all $r\in [m]$, that $\|\w_{r,1}\|\leq \sqrt{d} +  2\sqrt{\ln{m}}$. For the sake of the simplicity of the presentation, we assume that the event $\max_{r\in[m]}\|\w_{r,1}\|\leq 2\sqrt{\ln{m}}$ holds through the run of dropout training. Alternatively, we can re-initialize the weights until this condition is satisfied, or multiply the probability of success in our theorems by a factor of $1-1/m$.

Our first result establishes that the true misclassification error of dropout training  vanishes as $\tilde\cO(1/T)$. 

\begin{theorem}[Learning with Dropout]\label{thm:overall}
Let $c=\sqrt{d} + \max\{\frac{1}{14\gamma^2}, 2\sqrt{\ln{m}}\} + 1$ and $\lambda :=  5\gamma^{-1}\ln{2\eta T}+\sqrt{44 \gamma^{-2}\ln{24 \eta c\sqrt{m} T^2}}$. Under Assumption~\ref{ass:margin}, for any learning rate $\eta\in (0,\ln{2}]$ and any network of width satisfying $m \geq 2401 \gamma^{-6} \lambda^2$, with probability one over the randomization due to dropout, we have that
\begin{equation*}
\min_{t\in[T]}\E[ \cR(q\W_t) ] \leq \frac{1}{T}\sum_{t=1}^{T}\E[ \cR(q\W_t) ] \leq \frac{4 \lambda^2}{\eta T} = \cO\left(\frac{\ln{T}^2 + \ln{m d T}}{T}\right),
\end{equation*}
where the expectation is with respect to the initialization and the training samples.
\end{theorem}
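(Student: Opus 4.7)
The plan is a regret-based NTK analysis adapted to dropout: view the dropout trajectory as online gradient descent on the linearized losses $L_t^{(1)}(\W)=\ell(y_t\langle\nabla g_t(\W_1),\W\rangle)$, exhibit a fixed comparator $\W_1+\V^*$ whose linearized loss is $O(1/T)$ via Assumption~\ref{ass:margin}, couple the true iteration with this linearized one through a lazy-regime argument, and convert the resulting empirical-loss bound into a population misclassification bound using that $\W_t$ is a function only of $(\x_1,y_1),\ldots,(\x_{t-1},y_{t-1})$ and of the dropout masks.

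For the comparator, Assumption~\ref{ass:margin} supplies a map $\psi$ with $\|\psi\|\le 1$ and conditional expected margin at least $\gamma$. I take $\V=[\v_1,\ldots,\v_m]^\top$ with $\v_r=\frac{1}{\sqrt m}a_r\psi(\w_{r,1})$ so $\|\V\|_F\le 1$, and set $\V^*=\lambda\V$. Conditional on $(\x_t,y_t)$, the quantity $y_t\langle\nabla g_t(\W_1),\V\rangle$ is the average of $m$ independent bounded random variables (over the initializations and the mask $\B_t$) with mean at least $\gamma$; Hoeffding concentration combined with a union bound over $t\in[T]$ then gives $y_t\langle\nabla g_t(\W_1),\V\rangle\ge\gamma/2$ under the prescribed overparameterization, hence $L_t^{(1)}(\W_1+\V^*)\le\ell(\lambda\gamma/2)=O(1/T)$ for the stated $\lambda$. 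The standard one-step descent lemma
\begin{equation*}
\|\W_{t+1}-(\W_1+\V^*)\|_F^2\le\|\W_t-(\W_1+\V^*)\|_F^2-2\eta\bigl(L_t^{(1)}(\W_t)-L_t^{(1)}(\W_1+\V^*)\bigr)+\eta^2\|\nabla L_t(\W_t)\|_F^2
\end{equation*}
holds because the row-wise projection onto radius $c$ is non-expansive toward $\W_1+\V^*$ (the choice of $c$ makes every row of $\W_1+\V^*$ feasible), and $\nabla L_t(\W_t)$ coincides with $\nabla L_t^{(1)}(\W_t)$ up to hidden units whose activation pattern has flipped since initialization. Summing, using the chain $\|\nabla L_t(\W_t)\|_F\le Q_t(\W_t)\le L_t(\W_t)$ together with $\eta Q_t\le\eta\le\ln 2\le 1$ to absorb the $\eta^2\|\nabla L_t\|^2$ term into a fraction of the $2\eta L_t^{(1)}(\W_t)$ term on the right, yields $\sum_{t=1}^T L_t^{(1)}(\W_t)\le 2\lambda^2/\eta+O(1)$.

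The gradient-identification step and the transfer from $L_t^{(1)}$ to $L_t$ are validated by a lazy-regime bound $\|\W_t-\W_1\|_F=O(\lambda)$, which follows from the projection, the inductively maintained regret bound, and the finite-width norm of $\V^*$. A Gaussian anti-concentration bound on $\w_{r,1}^\top\x_t$ in a small neighbourhood of zero then caps the fraction of activation flips on any fixed example, closing the loop exactly when $m\gtrsim\gamma^{-6}\lambda^2$. Finally, since $(\x_t,y_t)$ is independent of $\W_t$ and the mask $\B_t$, taking expectations gives $\frac{1}{T}\sum_t\bE[L_t(\W_t)]\le 2\lambda^2/(\eta T)$; Jensen's inequality over the Bernoulli masks combined with the positive homogeneity $f(\cdot;q\W)=qf(\cdot;\W)$ yields $\bE[\ell(yf(\x;q\W_t))]\le\bE[L_t(\W_t)]$, and the pointwise bound $\cR(q\W_t)\le\ell(yf(\x;q\W_t))/\ln 2$ delivers the claimed $4\lambda^2/(\eta T)$ rate. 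The principal obstacle is coordinating three error sources — activation flips along the trajectory, the finite-width deviation of the margin witness, and the regret residual from the max-norm projection — under a single event, which is what dictates the exact formula for $\lambda$ and the $\gamma^{-6}$ width requirement; handling the Bernoulli mask inside the margin concentration so as to yield a bound \emph{in expectation over dropout} (rather than only with high probability) is the crux that distinguishes this analysis from the plain-SGD setting of \citet{ji2019polylogarithmic}.
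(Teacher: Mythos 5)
Your overall architecture matches the paper's: the descent/regret lemma against the comparator $\U=\W_1+\lambda\V$ with $\v_r=\frac{1}{\sqrt m}a_r\psi(\w_{r,1})$, Hoeffding concentration of the margin over the initialization and the Bernoulli mask, Gaussian anti-concentration to control activation flips in the lazy regime, and the final chain (independence of $(\x_t,y_t)$ from $\W_t$, Jensen over the masks, homogeneity for the $q$-scaling) are the same ingredients as the paper's Lemmas~\ref{lem:lyapunov}, \ref{lem:loss_ub}, \ref{lem:good_init}, and \ref{lem:expected}. One structural difference: you linearize at the initialization ($L^{(1)}_t$) whereas the paper linearizes at the current iterate ($L^{(t)}_t$), which by homogeneity gives the exact identity $\langle\nabla g_t(\W_t),\W_t\rangle=g_t(\W_t)$ and hence a clean convexity step $\langle\nabla L_t(\W_t),\W_t-\U\rangle\ge L_t(\W_t)-L^{(t)}_t(\U)$ with no activation-flip error inside the descent lemma; your version must patch that error into the descent inequality itself, which is workable but adds a term you only gesture at.

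The genuine gap is the passage from the high-probability regret bound to the claimed bound \emph{in expectation}. Everything you establish about $L^{(1)}_t(\U)=O(1/T)$ holds only on an event of probability $1-\delta$ (the margin-concentration and lazy-regime events), and the logistic loss is unbounded, so you cannot simply ``take expectations'' of that bound: the failure event of probability $\delta$ could a priori contribute an arbitrarily large amount to $\E[L^{(t)}_t(\U)]$. This is exactly where the max-norm constraint earns its keep in the paper: Lemma~\ref{lem:worst_case} shows that with probability one $L^{(t)}_t(\U)\le c\sqrt m/\ln{2}+1$ (every row of every iterate has norm at most $c$, and the loss grows only linearly for negative arguments), so the bad event contributes at most $\delta\cdot 2c\sqrt m$ to the expectation, and choosing $\delta=\frac{1}{4\eta c\sqrt m T}$ makes that contribution $O(1/(\eta T))$. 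That choice of $\delta$ is also what produces the specific term $\sqrt{44\gamma^{-2}\ln{24\eta c\sqrt m T^2}}$ in $\lambda$, which you take as given without deriving. You do flag that obtaining an expectation bound is the crux, but you locate the difficulty in handling the Bernoulli mask inside the margin concentration (which is dispatched by the same bounded-Hoeffding argument as in plain SGD); the actual obstacle is the unboundedness of the loss on the low-probability failure event, and your proposal contains no mechanism for it.
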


Theorem~\ref{thm:overall} shows that dropout successfully trains the complete network $f(\cdot;\W_t)$. Perhaps more interestingly, our next result shows that dropout successfully trains a potentially significantly narrower sub-network $g(\W_t;\cdot,\B_t)$. For this purpose, denote the misclassification error due to a network with weights $\W$ given a Bernoulli mask $\B$ as follows\\ 
\begin{equation*}
\cR(\W;\B) := \prob\{ y g(\W;\x,\B) < 0 \}.
\end{equation*}
Then the following result holds for the misclassification error of the iterates of dropout training. 
\begin{theorem}[Compression with  Dropout]\label{thm:sub-network}
Under the setting of Theorem~\ref{thm:overall}, with probability at least $1-\delta$ over initialization, the training data, and the randomization due to dropout, we have that
\begin{equation*}
\min_{t\in[T]} \cR(\W_t;\B_t) \leq \frac{1}{T}\sum_{t=1}^{T} \cR(W_t;\B_t) \leq \frac{12 \lambda^2}{\eta T} + \frac{6\ln{1/\delta}}{T} = \cO\left(\frac{\ln{m T}}{T}\right). 
\end{equation*}
\end{theorem}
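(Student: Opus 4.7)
My plan is to combine a deterministic (sample-path) bound on the cumulative training loss $\sum_{t=1}^T L_t(\W_t)$ -- the quantity the dropout-SGD updates actually drive down -- with a Bernstein-type martingale concentration that converts training-stream misclassifications into the desired expected misclassifications $\cR(\W_t;\B_t)$.

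First I would extract from the analysis behind Theorem~\ref{thm:overall} a pathwise inequality of the form $\sum_{t=1}^T L_t(\W_t) \leq C\lambda^2/\eta$ for some small absolute constant $C$. This is the NTK-style online regret inequality for projected SGD on the convex linearized losses $L_t^{(1)}$ against the comparator $\lambda\V$ introduced in Section~\ref{sec:preliminaries}: the regret against $\lambda\V$ is $O(\|\lambda\V - \W_1\|_F^2/\eta) = O(\lambda^2/\eta)$; the comparator loss $\sum_t L_t^{(1)}(\lambda\V)$ is vanishingly small by the $(q,\gamma)$-margin assumption; and the Taylor remainder $|g_t(\W_t) - g_t^{(1)}(\W_t)|$ can be made negligible using the width requirement $m \gtrsim \gamma^{-6}\lambda^2$ together with the max-norm projection, which keeps $\|\W_t - \W_1\|_F = O(\lambda)$ pathwise. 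Using the self-bounding inequality $\bI\{z\leq 0\}\leq \ell(z)/\ln{2}$ for logistic loss, this bound immediately implies $\sum_{t=1}^T X_t \leq \frac{1}{\ln{2}}\sum_{t=1}^T L_t(\W_t) \leq \frac{C\lambda^2}{\eta\ln{2}}$, where $X_t := \bI\{y_t g_t(\W_t)\leq 0\}$ is the training-stream misclassification indicator.

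Second, I would set $p_t := \cR(\W_t;\B_t)$ and observe that $X_t - p_t$ is a bounded martingale-difference sequence with respect to the natural filtration: $\W_t$ is $\cF_{t-1}$-measurable and $\B_t$, $(\x_t,y_t)$ are mutually independent conditional on $\cF_{t-1}$, so $\bE[X_t\mid \cF_{t-1},\B_t] = p_t$. The conditional variance is $p_t(1-p_t)\leq p_t$, which is precisely the self-bounding structure needed for Freedman's inequality. Applying Freedman and absorbing the resulting $\sqrt{(\sum_t p_t)\ln{1/\delta}}$ term back into the left-hand side via AM--GM yields $\sum_{t=1}^T p_t \leq 2\sum_{t=1}^T X_t + \tfrac{8}{3}\ln{1/\delta}$. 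Chaining with the bound from the previous step and dividing by $T$ gives the claimed $\frac{12\lambda^2}{\eta T} + \frac{6\ln{1/\delta}}{T}$, with the constants $12$ and $6$ chosen to absorb $1/\ln{2}$, $8/3$, and $C$.

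The main obstacle is ensuring that the first step actually delivers a \emph{pathwise} bound on $\sum_t L_t(\W_t)$ rather than only a bound in expectation: one must verify that the linearization error $|L_t(\W_t) - L_t^{(1)}(\W_t)|$ and the trajectory distance $\|\W_t - \W_1\|_F$ are controlled along every sample path, not just on average. Both conclusions should follow from the max-norm projection together with the width requirement $m \gtrsim \gamma^{-6}\lambda^2$ already used in Theorem~\ref{thm:overall}, after which the martingale concentration in the second step is essentially standard.
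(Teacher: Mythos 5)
Your proposal is correct and follows essentially the same route as the paper: a pathwise high-probability bound $\sum_{t=1}^{T} L_t(\W_t) \le 2\lambda^2/\eta$ extracted from Lemmas~\ref{lem:lyapunov} and~\ref{lem:loss_ub}, followed by a Freedman-type martingale inequality (Theorem~\ref{thm:martingale}) with self-bounding conditional variance, whose square-root term is absorbed back into the sum to convert training-stream quantities into the population risks $\cR(\W_t;\B_t)$. The only difference is cosmetic: the paper runs the martingale argument on the negative derivative $Q_t(\W_t)=-\ell'(y_t g_t(\W_t))$ and its population mean, using the sandwich $\bI\{z<0\}\le -2\ln{2}\,\ell'(z)\le 2\ln{2}\,\ell(z)$, whereas you run it directly on the $0$--$1$ indicators $X_t$ with conditional means $p_t$ and variance $p_t(1-p_t)\le p_t$; both yield the stated bound with comparable constants (and your comparator should be written as $\U=\W_1+\lambda\V$, not $\lambda\V$, so that the regret term is $\|\U-\W_1\|_F^2=\lambda^2\|\V\|_F^2$).
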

A few remarks are in order.

Theorem~\ref{thm:overall} gives a generalization error bound in expectation. A technical challenge here stems from the unboundedness of the logistic loss. In our analysis, the max-norm constraint in Algorithm~\ref{alg:dropout} is essential to guarantee that the logistic loss remains bounded through the run of the algorithm, thereby controlling the loss of the iterates in expectation. However, akin to analysis of SGD in the lazy regime, the iterates of dropout training are not likely to leave a small proximity of the initialization whatsoever. Therefore, for the particular choice of $c$ in the above theorems, the max-norm projections in Algorithm~\ref{alg:dropout} will be virtually inactive for a typical run.

The expected width of the sub-networks in Theorem~\ref{thm:sub-network} is only $qm$. Using Hoeffding's inequality and a union bound argument, for any $\delta\in(0,1)$, with probability at least $1-\delta$, it holds for all $t\in [T]$ that $g(\W_t; \x, \B_t)$ has at most $qm + \sqrt{2m\ln{T/\delta}}$ active hidden neurons. That is, in a typical run of the dropout training, with high probability, there exists a sub-network of width $\approx qm + \tilde\cO(\sqrt{m})$ whose generalization error is no larger than $\tilde\cO(1/T)$. In Section~\ref{sec:exp}, we further provide empirical evidence to verify this compression result. We note that dropout has long been considered as a means of network compression, improving post-hoc pruning~\citep{gomez2019learning}, in Bayesian settings~\citep{molchanov2017variational}, and in connection with the Lottery Ticket Hypothesis~\citep{frankle2018the}. However, we are not aware of any theoretical result supporting that claim prior to our work.  

Finally, the sub-optimality results in both Theorem~\ref{thm:overall} and Theorem~\ref{thm:sub-network} are agnostic to the dropout rate $1-q$. This is precisely due to the $(q,\gamma)$-Margin assumption: if it holds, then so does $(q',\gamma)$-Margin for any $q'\in[q,1]$. That is, these theorems hold for \emph{any} dropout rate not exceeding $1-q$. Therefore, in light of the remark above, larger admissible dropout rates are preferable since they result in higher compression rates, while enjoying the same generalization error guarantees.
\section{Proofs}\label{sec:proofs}
We begin by bounding $\E_{\cS_t}[\cR(q\W_t)]$, the expected population error of the iterates, in terms of $\E_{\cS_t,\cB_t}[L_t(\W_t)]$, the expected instantaneous loss of the random sub-networks. In particular, using simple arguments including the smoothing property, the fact that $\W_t$ is independent from $(\x_t,y_t)$ given $\cS_{t-1}$, and that logistic loss upper-bounds the zero-one loss, it is easy to bound the expected population risk as $\E_{\cS_t}[\cR(q\W_t)] \leq \E_{\cS_t}[\ell(y_t f(\x_t; q\W_t))]$. Furthermore, using Jensen's inequality,~we have that $\ell(y_t f_t(q\W_t))\leq \E_{\B_t}[L_t(\W_t)].$ The upper bound then follows from these two inequalities. In the following, we present the main ideas in bounding the average instantaneous loss of the iterates.

Under Algorithm~\ref{alg:dropout}, dropout iterates are guaranteed to remain in the set $\cW_c:=\{ \W\in \R^{m\times d}: \ \|\w_r\| \leq c \}$. Using this property and the dropout update rule, we track the distance of consecutive iterates $(\W_{t+1},\W_t)$ from any competitor $\U \in \cW_c$, which leads to the following upper bound on the average instantaneous loss of iterates.

\begin{lemma}\label{lem:lyapunov}
Let $\W_1,\ldots,\W_{T}$ be the sequence of dropout iterates with a learning rate satisfying $\eta \leq \ln{2}$. Then, it holds for any $\U\in \cW_c$ that 
\begin{equation}\label{eq:regret}
\frac{1}{T}\sum_{t=1}^{T}L_t(\W_t) \leq \frac{\|\W_1 - \U \|_F^2}{\eta T} + \frac{2}{T}\sum_{t=1}^{T} L^{(t)}_t(\U).
\end{equation}
\end{lemma}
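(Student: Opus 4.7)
The plan is to run a standard projected-(stochastic)-gradient regret analysis, with the crucial observation that at $\W = \W_t$ the gradient of the non-convex dropout loss $L_t$ coincides with the gradient of its convex linearized surrogate $L_t^{(t)}$. Starting from the update $\W_{t+\frac12} = \W_t - \eta\,\nabla L_t(\W_t)$, I would expand the squared distance to an arbitrary competitor $\U \in \cW_c$,
\begin{equation*}
\|\W_{t+\frac12}-\U\|_F^2 = \|\W_t-\U\|_F^2 - 2\eta\,\langle \nabla L_t(\W_t),\,\W_t-\U\rangle + \eta^2\,\|\nabla L_t(\W_t)\|_F^2,
\end{equation*}
and then invoke non-expansiveness of $\Pi_c$ (row-wise Euclidean projection onto a ball containing every row of $\U$) to replace $\W_{t+\frac12}$ by $\W_{t+1}$ on the left.

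The heart of the argument is recognizing the linear term as a convex suboptimality gap. From $\nabla L_t(\W_t) = -Q_t(\W_t)\,y_t\,\nabla g_t(\W_t)$ together with the ReLU-homogeneity identity $g_t(\W_t) = \langle\nabla g_t(\W_t),\W_t\rangle = g_t^{(t)}(\W_t)$ recorded in Section~\ref{sec:preliminaries}, the inner product reduces to $-Q_t(\W_t)\,y_t\,(g_t(\W_t) - g_t^{(t)}(\U))$, and convexity of $\ell$ (applied to the two scalar inputs $y_t g_t(\W_t)$ and $y_t g_t^{(t)}(\U)$) bounds this below by $L_t(\W_t) - L_t^{(t)}(\U)$. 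For the quadratic term I would chain the three bounds stated at the end of Section~\ref{sec:preliminaries} — namely $\|\nabla g_t(\W_t)\|_F \leq 1$, $Q_t(\W_t)\leq 1$, and $Q_t \leq L_t$ — to obtain $\|\nabla L_t(\W_t)\|_F^2 \leq Q_t(\W_t)^2 \leq Q_t(\W_t) \leq L_t(\W_t)$.

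Assembling these pieces gives the one-step drift inequality
\begin{equation*}
\|\W_{t+1}-\U\|_F^2 \leq \|\W_t-\U\|_F^2 - (2\eta - \eta^2)\,L_t(\W_t) + 2\eta\, L_t^{(t)}(\U),
\end{equation*}
and the hypothesis $\eta \leq \ln 2 < 1$ ensures $2\eta-\eta^2 \geq \eta$; telescoping from $t=1$ to $T$, discarding the non-negative $\|\W_{T+1}-\U\|_F^2$, and dividing by $\eta T$ yields the claim. The only conceptually non-routine step is the ReLU-homogeneity trick in the second paragraph: a direct convexity appeal to $L_t$ is unavailable because $g_t$ is non-convex in $\W$, but the identity $g_t(\W_t) = g_t^{(t)}(\W_t)$ forces $\nabla L_t(\W_t) = \nabla L_t^{(t)}(\W_t)$, so the whole analysis collapses to the textbook projected-SGD regret bound against the convex surrogates $L_t^{(t)}$.
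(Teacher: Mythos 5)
Your proposal is correct and follows essentially the same route as the paper's proof: non-expansiveness of the row-wise projection, the ReLU-homogeneity identity $g_t(\W_t)=\langle\nabla g_t(\W_t),\W_t\rangle$ combined with convexity of $\ell$ to turn the inner-product term into $L_t(\W_t)-L^{(t)}_t(\U)$, the chain $\|\nabla L_t(\W_t)\|_F^2\leq Q_t(\W_t)^2\leq c\,Q_t(\W_t)\leq c\,L_t(\W_t)$ for the quadratic term, and telescoping. The only cosmetic difference is that the paper bounds $Q_t\leq 1/\ln{2}$ and absorbs the constant using $\eta\leq\ln{2}$, whereas you use $Q_t\leq 1$ and $2\eta-\eta^2\geq\eta$; under the stated hypothesis $\eta\leq\ln{2}$ both variants yield the same one-step inequality.
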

Note that the upper bound in Equation~\eqref{eq:regret} holds for \emph{any} competitor $\U \in \cW_c$; however, we seek to minimize the upper-bound on the right hand side of Equation~\eqref{eq:regret} by finding a \emph{sweet spot} that maintains a trade-off between 1) the distance from initialization, and 2) the average instantaneous loss for the linearized models. Following~\cite{ji2019polylogarithmic}, we represent such a competitor as an interpolation between the initial weights $\W_1$ and the \emph{max-margin} competitor $\V$, i.e. $\U:=\W_1+\lambda \V$, where $\lambda$ is the trade-off parameter. Recall that $\V := [\v_1,\cdots,\v_m] \in \R^{d \times m}$, where $\v_r = \frac{1}{\sqrt{m}}a_r\psi(\w_{r,1})$ for any $r \in [m]$, and $\psi$ is given by assumption~\ref{ass:margin} . Thus, the first term on the right hand side above can be conveniently bounded as $\frac{\lambda^2}{\eta T}$; Lemma~\ref{lem:loss_ub} bounds~the~second~term~as~follows.
\begin{lemma}\label{lem:loss_ub}
Under the setting of Theorem~\ref{thm:overall}, it holds with probability at least $1-\delta$ simultaneously for all iterates $t\in[T]$ that 1) $\|\w_{r,t}-\w_{r,1}\|\leq \frac{7 \lambda}{2\gamma\sqrt{m}}$, for all $r\in[m]$; and 2) $L^{(t)}_t(\U) \leq \frac{\lambda^2}{2\eta T}.$
\end{lemma}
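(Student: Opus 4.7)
The plan is a simultaneous strong induction on $t$: both parts hold for all iterates $s\leq t$. The statements are coupled through Lemma~\ref{lem:lyapunov} instantiated with the interpolating competitor $\U:=\W_1+\lambda\V$, which lies in $\cW_c$ by the choice of $c$: Part 2 at iterates $s\leq t$ upper-bounds $\sum_s L_s^{(s)}(\U)$ and hence (via Lemma~\ref{lem:lyapunov}) the cumulative loss $\sum_s L_s(\W_s)$, which controls the distance in Part 1 at step $t+1$; conversely, Part 1 bounds the number of ReLU activation flips needed to prove Part 2 at step $t+1$. The base case $t=1$ needs only Part 2, which is established by the same concentration argument as the inductive step below, with $\W_1$ in place of $\W_t$ and no flipped neurons to worry about.

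For the inductive step on Part 2, I decompose $y_t g_t^{(t)}(\U) = y_t\langle \nabla g_t(\W_t),\W_1\rangle + \lambda\, y_t\langle\nabla g_t(\W_t),\V\rangle$. A sub-Gaussian Hoeffding bound over the random signs $a_r$ (using $\max_r\|\w_{r,1}\|\leq 2\sqrt{\ln m}$) yields $|y_t\langle\nabla g_t(\W_t),\W_1\rangle|=O(\sqrt{\ln(mT)})$ with high probability. For the margin term, I first replace $\bI\{\w_{r,t}^\top\x_t\geq 0\}$ by $\bI\{\w_{r,1}^\top\x_t\geq 0\}$: the inductive hypothesis on Part 1 forces an activation flip only when $|\w_{r,1}^\top\x_t|\leq 7\lambda/(2\gamma\sqrt m)$, which the width condition $m\geq 2401\gamma^{-6}\lambda^2$ pins below $\gamma^2/14$; Gaussian anti-concentration then caps the realized fraction of flipped neurons by $O(\lambda/(\gamma\sqrt m))$, contributing a lower-order perturbation. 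After the replacement, the summand is an average over $m$ independent bounded terms whose expectation exceeds $\gamma$ by Assumption~\ref{ass:margin}, so Hoeffding yields $y_t\langle\nabla g_t(\W_t),\V\rangle\geq \gamma-O(\sqrt{\ln(mT)/m})$. Combining, the choice of $\lambda$ is exactly large enough to give $y_t g_t^{(t)}(\U)\geq \ln(2\eta T/\lambda^2)$, equivalently $L_t^{(t)}(\U)\leq \lambda^2/(2\eta T)$.

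For the inductive step on Part 1, the dropout update and the non-expansive max-norm projection give
\[
\|\w_{r,t+1}-\w_{r,1}\|\leq \sum_{s\leq t}\eta\,\|\partial_{\w_r}L_s(\W_s)\|\leq \frac{\eta}{\sqrt m}\sum_{s\leq t}Q_s(\W_s),
\]
using $\|\partial_{\w_r}L_s\|\leq Q_s b_{r,s}/\sqrt m$. Lemma~\ref{lem:lyapunov} with $\U=\W_1+\lambda\V$ combined with the inductive hypothesis on Part 2 yields $\sum_{s\leq t}L_s(\W_s)\leq \lambda^2/\eta+2\sum_s L_s^{(s)}(\U)\leq 2\lambda^2/\eta$, and the self-bounding $Q_s\leq L_s$ of the logistic loss gives $\sum Q_s\leq 2\lambda^2/\eta$; combined with the width condition this closes the induction.

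The main obstacle is closing the induction on Part 1 with the claimed constant $7/(2\gamma)$. The plain chain above produces distance of order $\lambda^2/\sqrt m$, which matches $7\lambda/(2\gamma\sqrt m)$ only through the precise width condition $\sqrt m\geq 49\lambda/\gamma^3$ and, likely, a sharpening that exploits the stronger self-bounding inequality $|\ell'(z)|^2\leq O(\ell(z))$ of the logistic loss combined with Cauchy--Schwarz (or an equivalent Bregman-divergence bookkeeping that leverages $h_s^*=y_s g_s^{(s)}(\U)\geq \ln(2\eta T/\lambda^2)$ from Part 2). The concentration steps in Part 2 additionally require a union bound over $t\in[T]$ and $r\in[m]$, which is absorbed into the $\sqrt{\ln(24\eta c\sqrt m T^2)}$ factor baked into $\lambda$.
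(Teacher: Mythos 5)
Your Part 2 plan is essentially the paper's: the same decomposition of $y_t\langle\nabla g_t(\W_t),\U\rangle$ into an initialization term and a margin term, with activation flips controlled by Gaussian anti-concentration under the lazy-regime hypothesis and Hoeffding supplying the concentration at initialization. (One small imprecision: you cannot apply Hoeffding over the signs $a_r$ directly to $\langle\nabla g_t(\W_t),\W_1\rangle$, since the indicators $\bI\{\w_{r,t}^\top\x_t>0\}$ depend on the whole trajectory and hence on $\a$; you must first pass to $\nabla g_t(\W_1)$ via the flip bound, as the paper does, and then use homogeneity to identify $\langle\nabla g_t(\W_1),\W_1\rangle=g_t(\W_1)$.)

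The genuine gap is in Part 1, and you have correctly located it but not repaired it. The chain $\|\w_{r,t+1}-\w_{r,1}\|\leq\frac{\eta}{\sqrt m}\sum_{s\leq t}Q_s(\W_s)\leq\frac{\eta}{\sqrt m}\sum_{s\leq t}L_s(\W_s)\leq\frac{2\lambda^2}{\sqrt m}$ cannot yield $\frac{7\lambda}{2\gamma\sqrt m}$: since $\lambda$ grows like $\gamma^{-1}\ln{T}$, the required inequality $2\lambda^2\leq\frac{7\lambda}{2\gamma}$ fails for large $T$, and the width condition is of no help because the factor $1/\sqrt m$ appears on both sides and cancels. Your proposed sharpening via $|\ell'(z)|^2\lesssim\ell(z)$ and Cauchy--Schwarz makes matters worse, giving $\sum_s Q_s\leq\sqrt{T\sum_s Q_s^2}\lesssim\sqrt{T\lambda^2/\eta}$, which grows with $T$. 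The missing idea (from Ji--Telgarsky, reproduced in the paper) is a margin-weighted telescoping: the Part 2 analysis already guarantees $y_k\langle\nabla g_k(\W_k),\V\rangle\geq\frac{5\gamma}{7}$ for all $k\leq t$, so one may multiply each summand by $\frac{7}{5\gamma}y_k\langle\nabla g_k(\W_k),\V\rangle\geq1$ and use the update rule $\W_{k+1}-\W_k=\eta Q_k(\W_k)\,y_k\nabla g_k(\W_k)$ to obtain
\begin{equation*}
\frac{\eta}{\sqrt m}\sum_{k=1}^{t}Q_k(\W_k)\;\leq\;\frac{7}{5\gamma\sqrt m}\,\bigl\langle\W_{t+\frac12}-\W_1,\V\bigr\rangle\;\leq\;\frac{7}{5\gamma\sqrt m}\Bigl(\bigl\|\W_{t+\frac12}-\U\bigr\|_F+\lambda\Bigr)\;\leq\;\frac{7(\sqrt2+1)\lambda}{5\gamma\sqrt m}\;\leq\;\frac{7\lambda}{2\gamma\sqrt m},
\end{equation*}
where $\|\W_{t+\frac12}-\U\|_F\leq\sqrt2\lambda$ follows from iterating the one-step descent inequality of Lemma~\ref{lem:lyapunov} together with the Part 2 bound $L^{(k)}_k(\U)\leq\frac{\lambda^2}{2\eta T}$. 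This replaces the $O(\lambda^2/\sqrt m)$ rate by $O(\lambda/(\gamma\sqrt m))$ and is what actually closes the induction; without it the lemma as stated is not proved.
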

Therefore, left hand side of Equation~\eqref{eq:regret}, i.e., the average instantaneous loss of the iterates, can be bounded with high probability as $\frac{2\lambda^2}{\eta T}$. To get the bound in expectation, as presented in Theorem~\ref{thm:overall}, we need to control $ L^{(t)}_t(\U)$ in worst-case scenario. We take advantage of the max-norm constraints in Algorithm~\ref{alg:dropout}, and show in Lemma~\ref{lem:worst_case} that with probability one all iterates satisfy $L^{(t)}_t(\U) \leq \frac{c\sqrt{m}}{\ln{2}}+1$. The proof of Theorem~\ref{thm:overall} then follows from carefully choosing $\delta$, the confidence parameter. To prove the compression result in Theorem~\ref{thm:sub-network}, we use the fact that the zero-one loss can be bounded in terms of the negative derivative of the logistic loss~\citep{cao2019generalization}. Therefore, we can bound $\cR(\W_t;\B_t)$, the population risk of the sub-networks, in terms of $Q(\W_t;\B_t) = \E_\cD[-\ell'(y_t g_t(\W_t))]$. Following~\cite{ji2019polylogarithmic}, the proof of Theorem~\ref{thm:sub-network} then entails showing that $\sum_{t=1}^{T}Q(\W_t;\B_t)$ is close to $\sum_{t=1}^{T}Q_t(\W_t)$, which itself is bounded by the average instantaneous loss of the iterates.

We now present the main ideas in proving Lemma~\ref{lem:loss_ub}, which closely follows~\cite{ji2019polylogarithmic}. Since $L^{(t)}_t(\U) \leq e^{-y_t \langle \nabla g_t(\W_t), \U \rangle}$, the proof entails lower bounding $y_t \langle \nabla g_t(\W_t), \U \rangle$, which can be decomposed as follows
\begin{align}
y_t \langle \nabla g_t(\W_t), \U \rangle &= y_t \langle \nabla g_t(\W_1), \W_1 \rangle + y_t \langle \nabla g_t(\W_t) - \nabla g_t(\W_1), \W_1 \rangle \nonumber \\
&\qquad \qquad + \lambda y_t \langle \nabla g_t(\W_1), \V \rangle + \lambda y_t \langle \nabla g_t(\W_t) - \nabla g_i(\B_t\W_1), \V \rangle. \label{eq:U}
\end{align}
By homogeneity of the ReLU activations, the first term in Equation~\eqref{eq:U} precisely computes $y_t g_t(\W_1)$, which cannot be too negative under the initialization scheme used in Algorithm~\ref{alg:dropout}, as we show in Lemma~\ref{lem:small_init}.
On the other hand, we show in Lemma~\ref{lem:good_init} that under Assumption~\ref{ass:margin}, $\V$ has a good margin with respect to the randomly initialized weights $\W_1$, so that the third term in Equation~\eqref{eq:U} is concentrated around the margin parameter $\gamma$.
The second and the fourth terms in Equation~\eqref{eq:U} can be bounded thanks to the lazy regime, where $\W_t$ remains close to $\W_1$ at all times. In particular, provided $\|\w_{r,t}-\w_{r,1}\|\leq \frac{7\lambda}{2\gamma \sqrt{m}}$, we show in Lemma~\ref{lem:lazy} that at most only $O(1/\sqrt{m})$-fraction of neural activations change, and thus $\nabla g_t(\W_t) - \nabla g_t(\W_1)$ has a small norm. Lemma~\ref{lem:loss_ub} then follows from carefully choosing $\lambda$ and $m$ such that the right hand side of Equation~\eqref{eq:U} is sufficiently positive.

\section{Experimental Results}\label{sec:exp}
\begin{figure*}[t]
\centering
\vspace*{-15pt}
\begin{tabular}{ccc}
\hspace*{-23pt} 
\includegraphics[width=0.344\textwidth]{./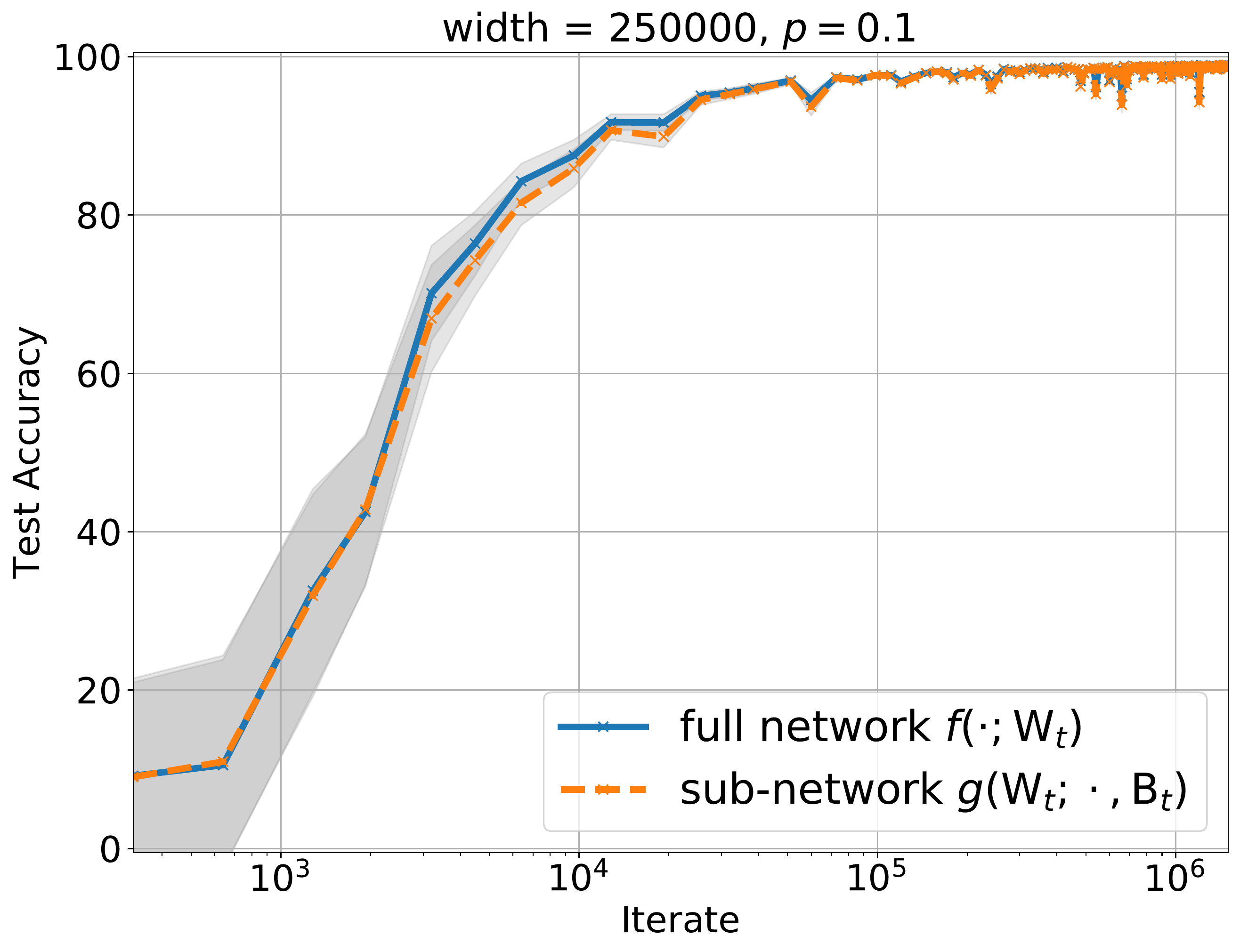}
&
\hspace*{-12pt} 
\includegraphics[width=0.344\textwidth]{./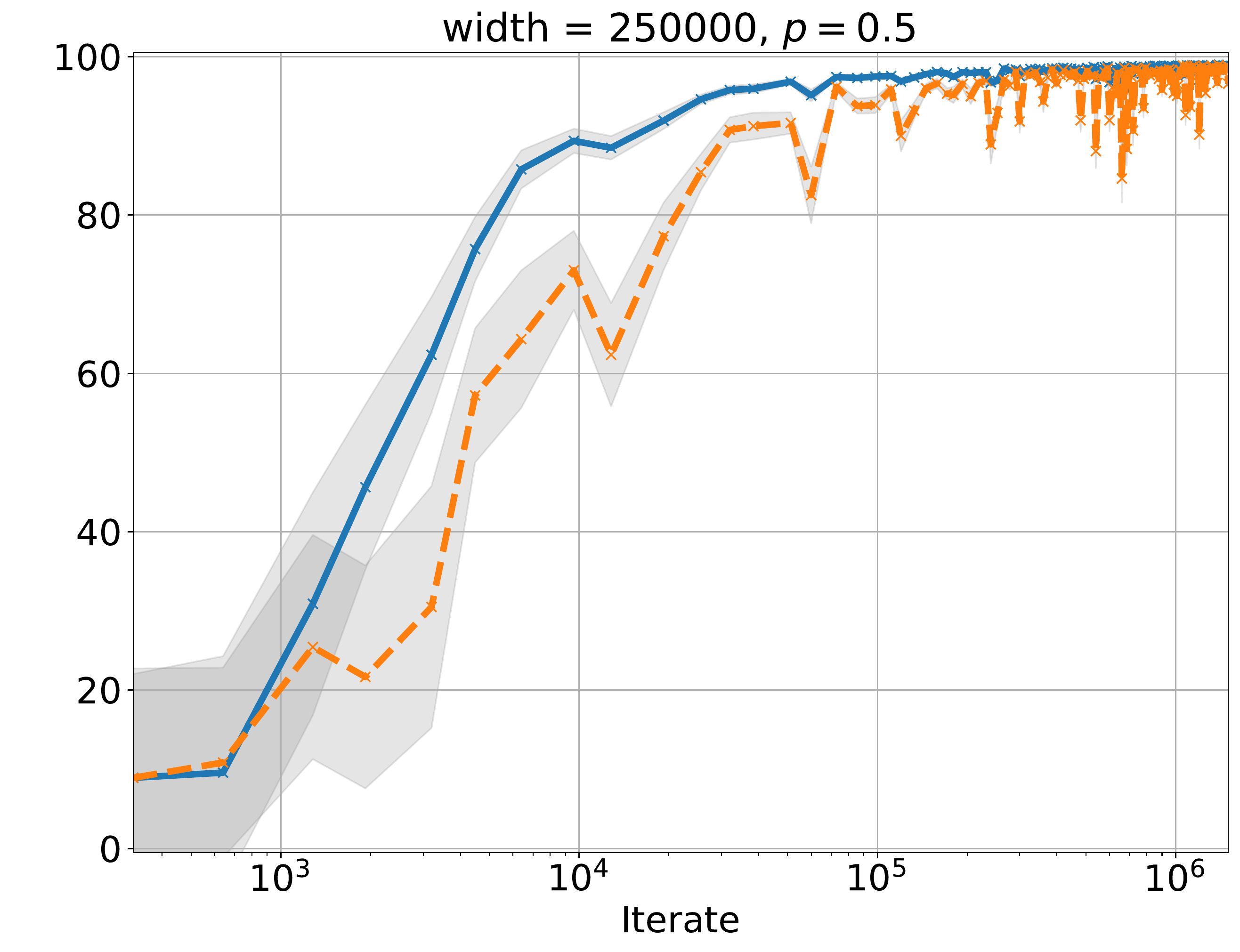}
&
\hspace*{-12pt} 
\includegraphics[width=0.344\textwidth]{./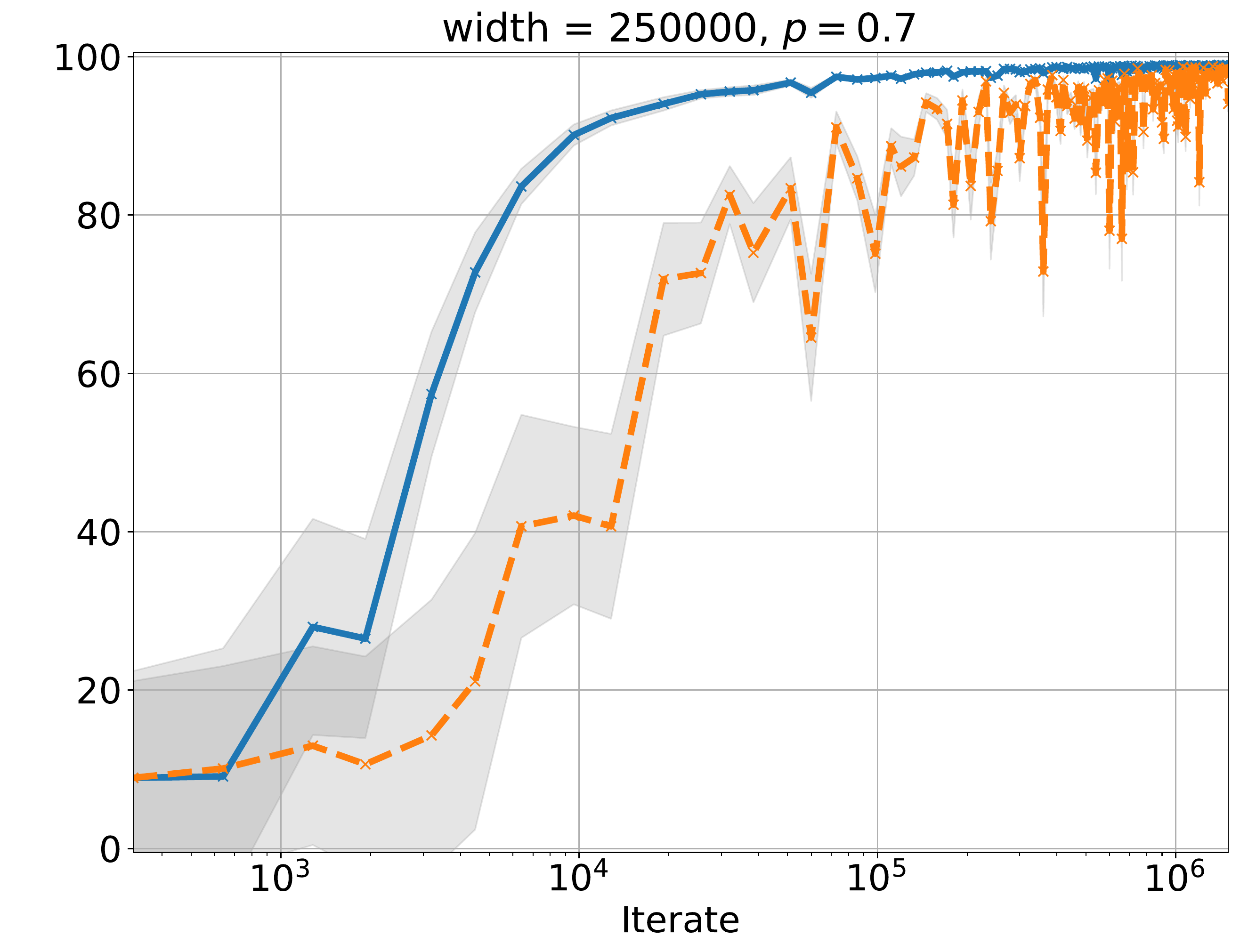}
\\
\hspace*{-23pt} 
\includegraphics[width=0.344\textwidth]{./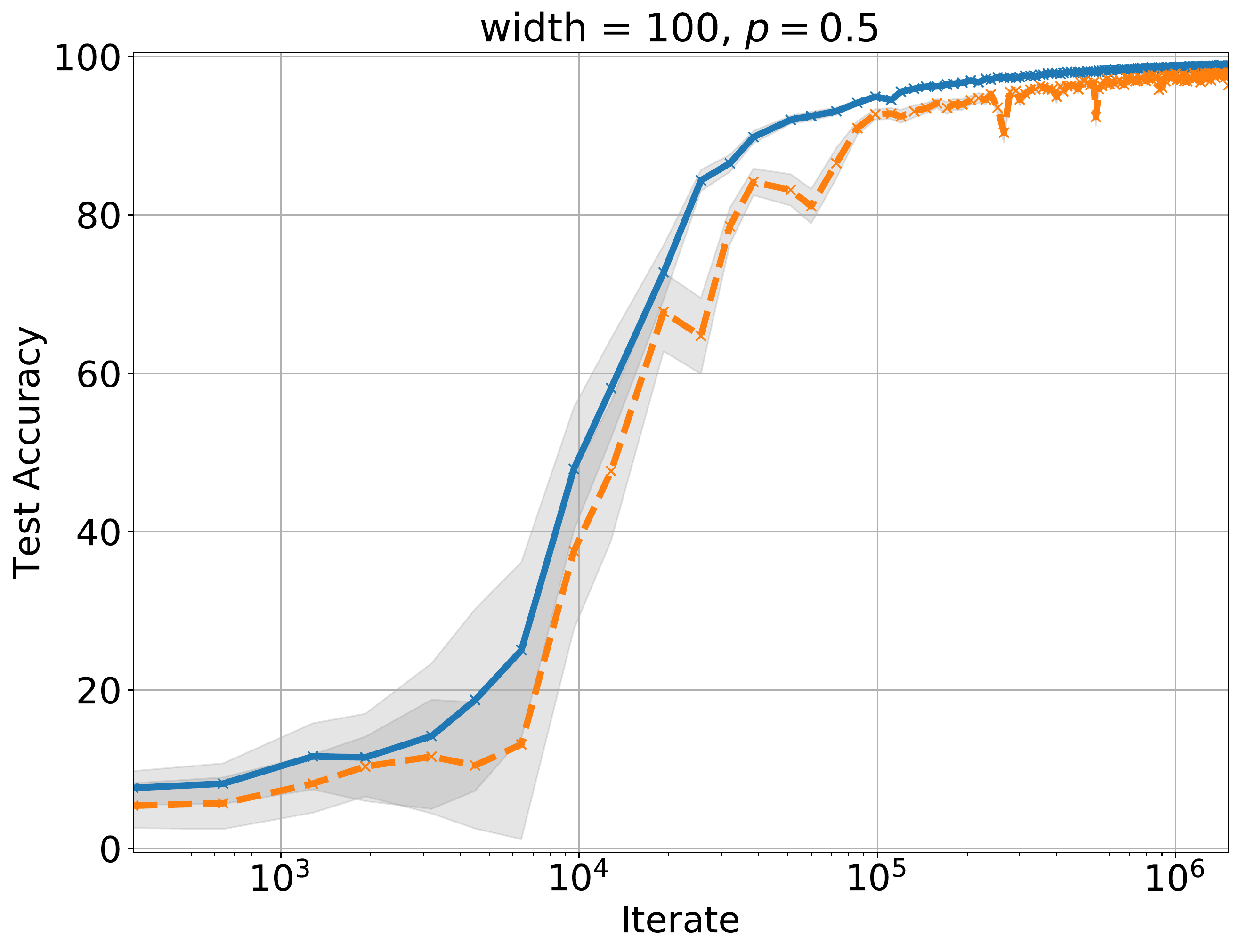}
&
\hspace*{-12pt} 
\includegraphics[width=0.344\textwidth]{./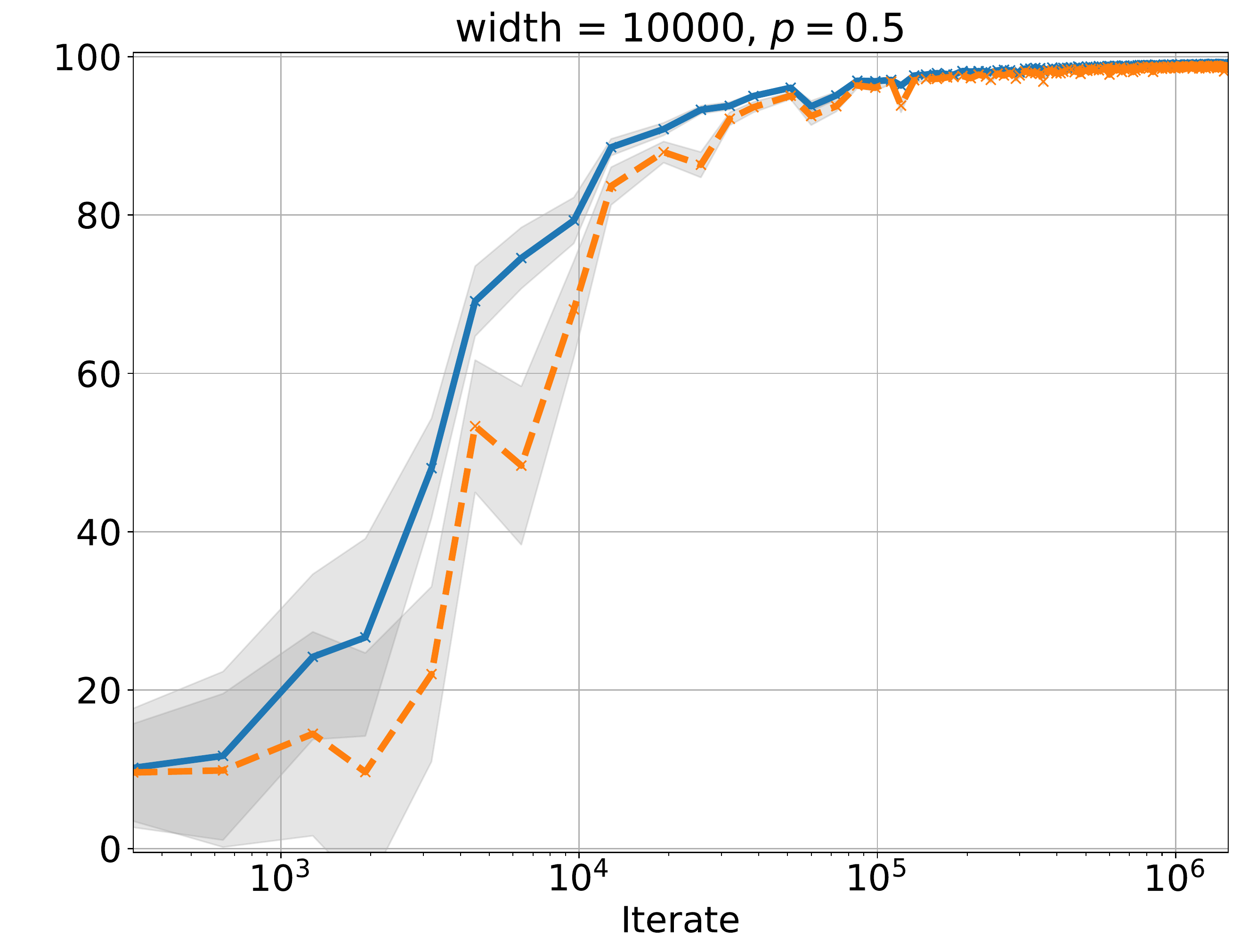}
&
\hspace*{-12pt} 
\includegraphics[width=0.344\textwidth]{./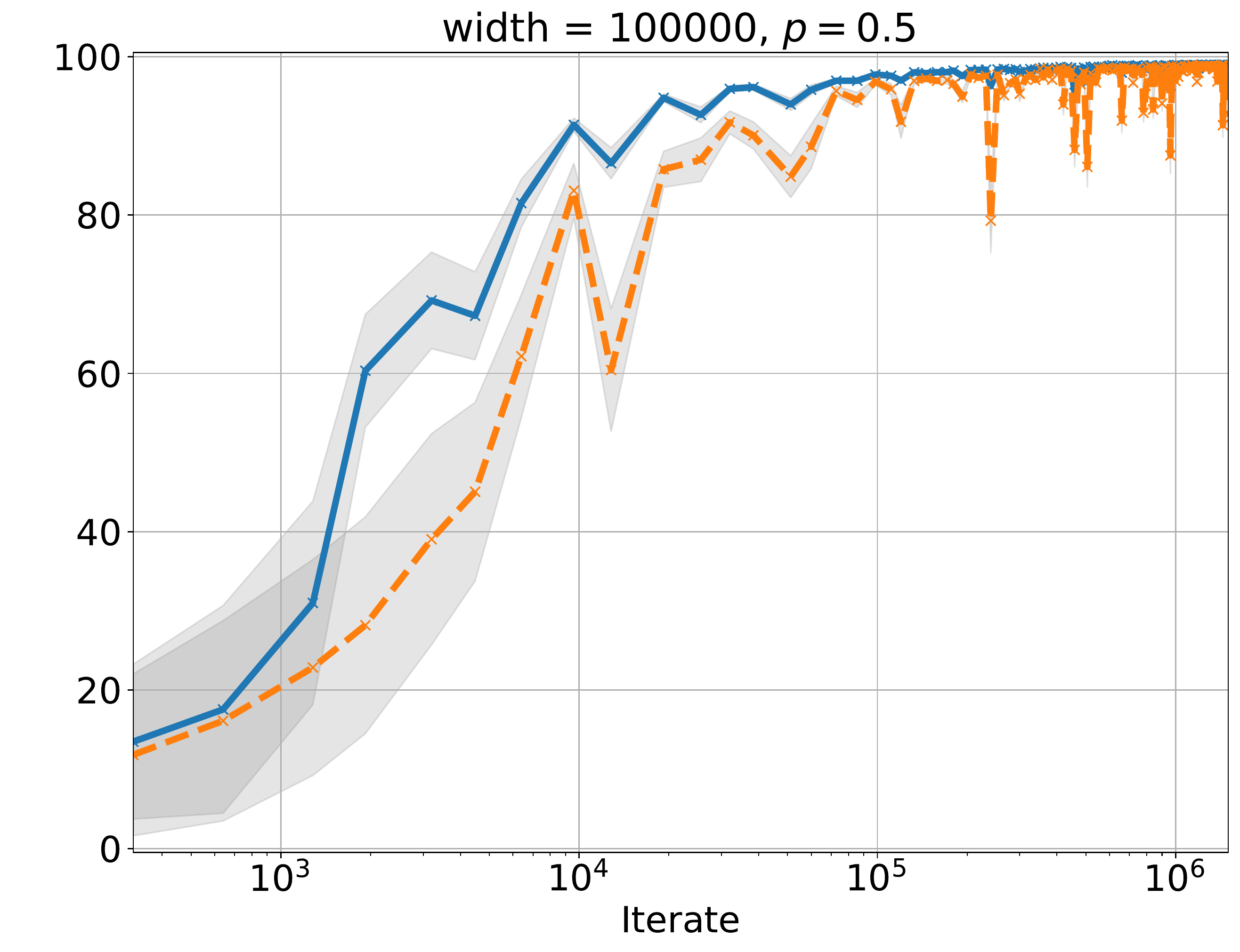}
\vspace*{-5pt} 
\end{tabular}
\caption{\label{fig:full_visited}Test accuracy of the full network $f(\cdot;\W_t)$ as well as the sub-networks $g(\W_t;\cdot, \B_t)$ drawn by dropout iterates, as a function of number of iterations $t$, for ({\bf top}) fixed width $m=250K$ and several dropout rates $1-p\in\{0.1,0.5,0.7\}$; and ({\bf bottom}) fixed dropout rate $1-p=0.5$ and several widths $m\in\{100, 10K, 100K\}$.}
\end{figure*}

The goal of this section is to investigate if dropout indeed compresses the model, as predicted by Theorem~\ref{thm:sub-network}. In particular, we seek to understand if the (sparse) dropout sub-networks $g(\W; \cdot, \B)$ -- regardless of being visited by dropout during the training -- are comparable to the full network $f(\cdot; \W)$, in terms of the test accuracy.

We train a convolutional neural network with a dropout layer on the top hidden layer, using cross-entropy loss, on the MNIST dataset. The network consists of two convolutional layers with max-pooling, followed by three fully-connected layers. All the activations are ReLU. We use a constant learning rate $\eta = 0.01$ and batch-size equal to $64$ for all the experiments. We train several networks where except for the top layer widths ($m \in \{ 100, 500, 1K, 5K, 10K, 50K, 100K, 250K \}$), all other architectural parameters are fixed. We track the test accuracy over 25 epochs as a function of number of iterations, for the full network, the sub-networks visited by dropout, as well as random but fixed sub-networks that are drawn independently, using the same dropout rate. We run the experiments for several values of the dropout rate, $1-p\in \{ 0.1,0.2,0.3,\ldots, 0.9 \}$.

Figure~\ref{fig:full_visited} shows the test accuracy of the full network $f(\cdot;\W_t)$ (blue, solid curve) as well as the dropout iterates $g(\W_t;\cdot, \B_t)$ (orange, dashed curve), as a function of the number of iterations. Both curves are averaged over 50 independent runs of the experiment; the grey region captures the standard deviation. It can be seen that the (sparse) sub-networks drawn by dropout during the training, are indeed comparable to the full network in terms of the generalization error. As expected, the gap between the full network and the sparse sub-networks is higher for narrower networks, and for higher dropout rates. This figure verifies our compression result in Theorem~\ref{thm:sub-network}.

Next, we show that dropout also generalizes to sub-networks that were not observed during the training. In other words, random sub-networks drawn from the same Bernoulli distribution, also performed well. We run the following experiment on the same convolutional network architecture described above with  widths $m\in\{ 100, 1K, 10K, 100K \}$. We draw 100 sub-networks $g(\W;\cdot,\B_1), \ldots, g(\W;\cdot,\B_{100})$, corresponding to diagonal Bernoulli matrices $\B_1,\ldots,\B_{100}$, all generated by the same Bernoulli distribution used at training (Bernoulli parameter $p=0.2$, i.e., dropout rate $1-p=0.8$). In Figure~\ref{fig:full_independent}, we plot the generalization error of these sub-networks as well as the full network as a function of iteration number, as orange and blue curves, respectively. We observe that, as the width increases, the sub-networks become increasingly more competitive; it is remarkable that the effective width of these \emph{typical} sub-networks are only $\approx 1/5$ of the full network.

\begin{figure*}
\centering
\begin{tabular}{cccc}
\hspace*{-21pt} 
\includegraphics[width=0.261\textwidth]{./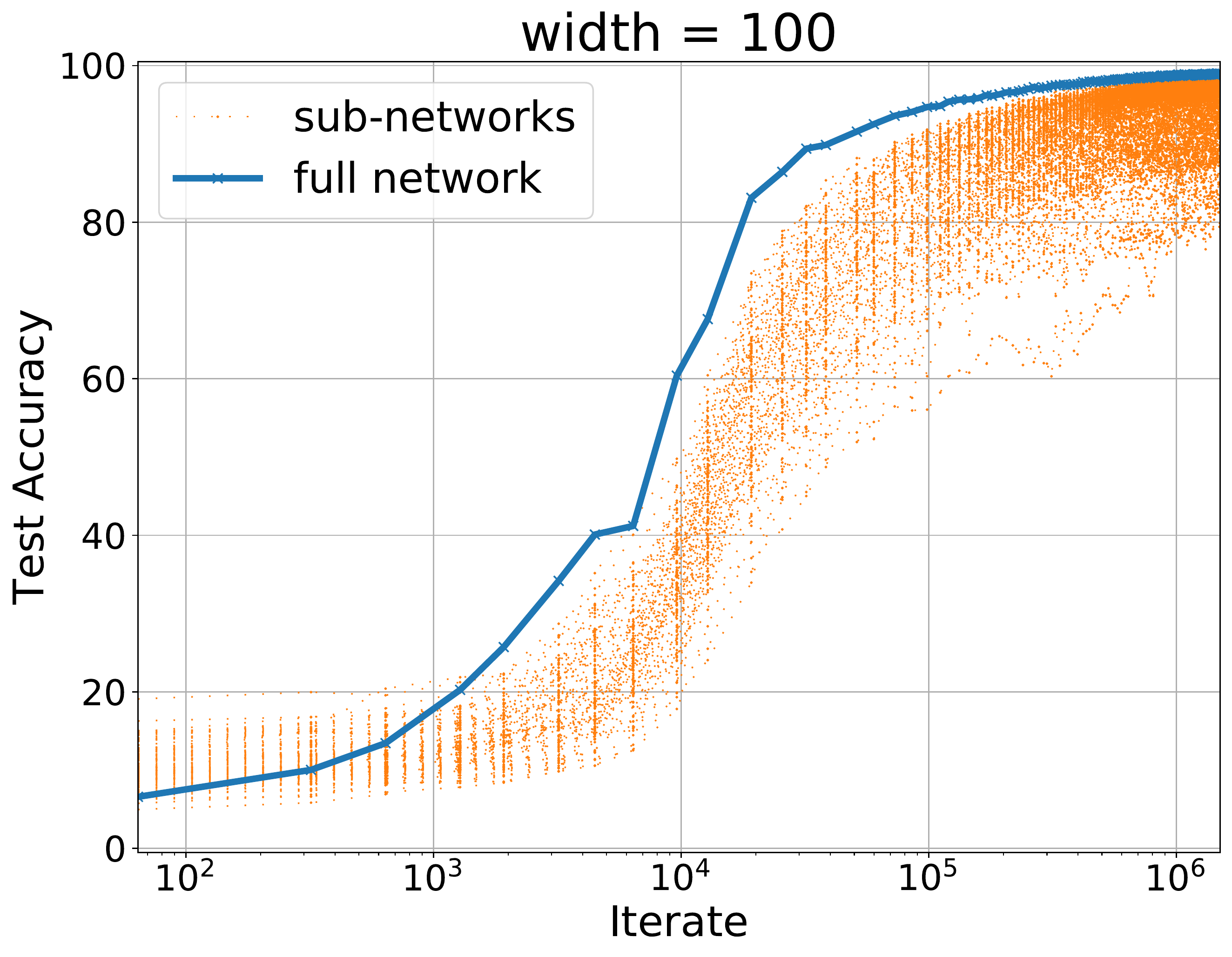}
&
\hspace*{-15pt} 
\includegraphics[width=0.261\textwidth]{./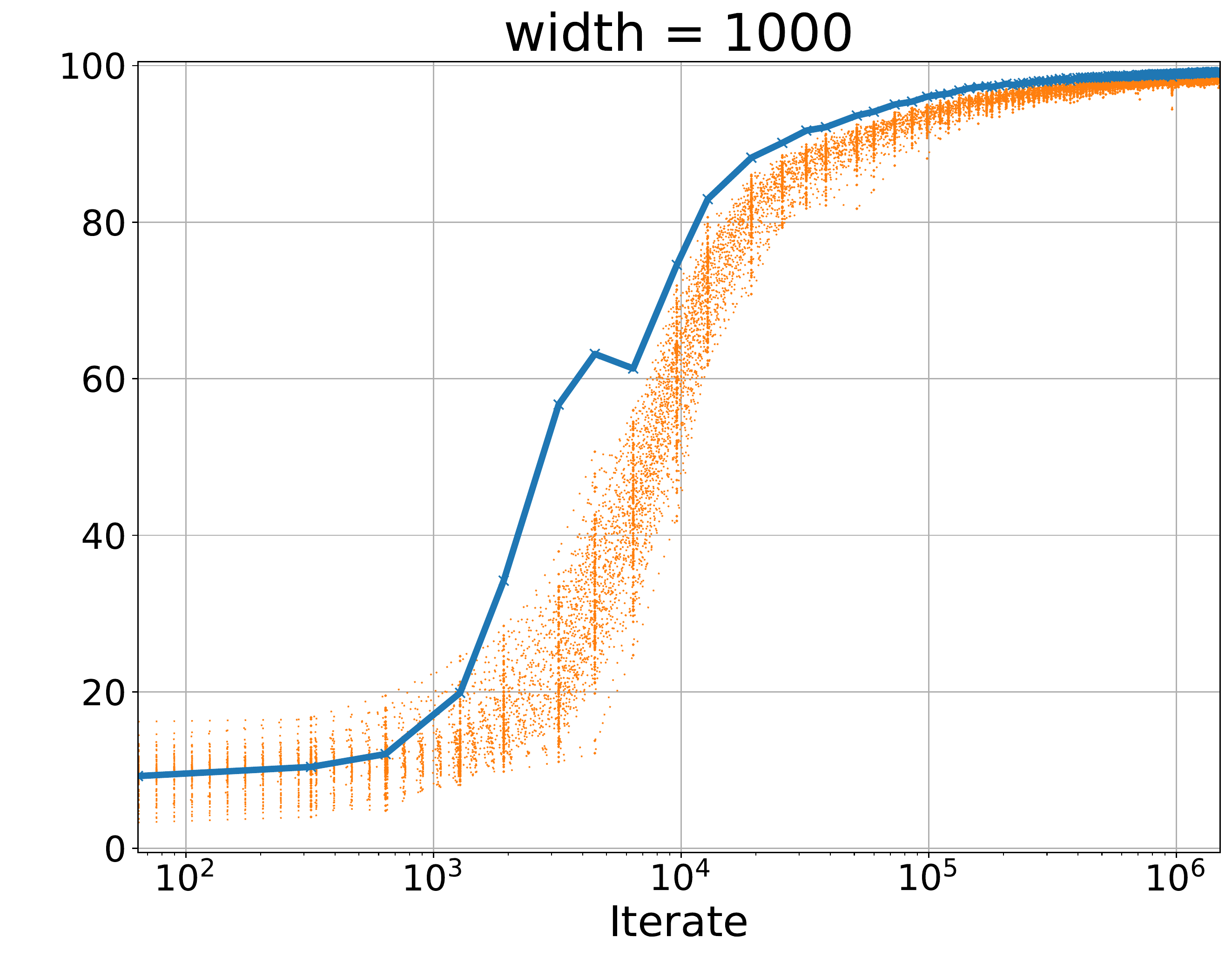}
&
\hspace*{-15pt} 
\includegraphics[width=0.261\textwidth]{./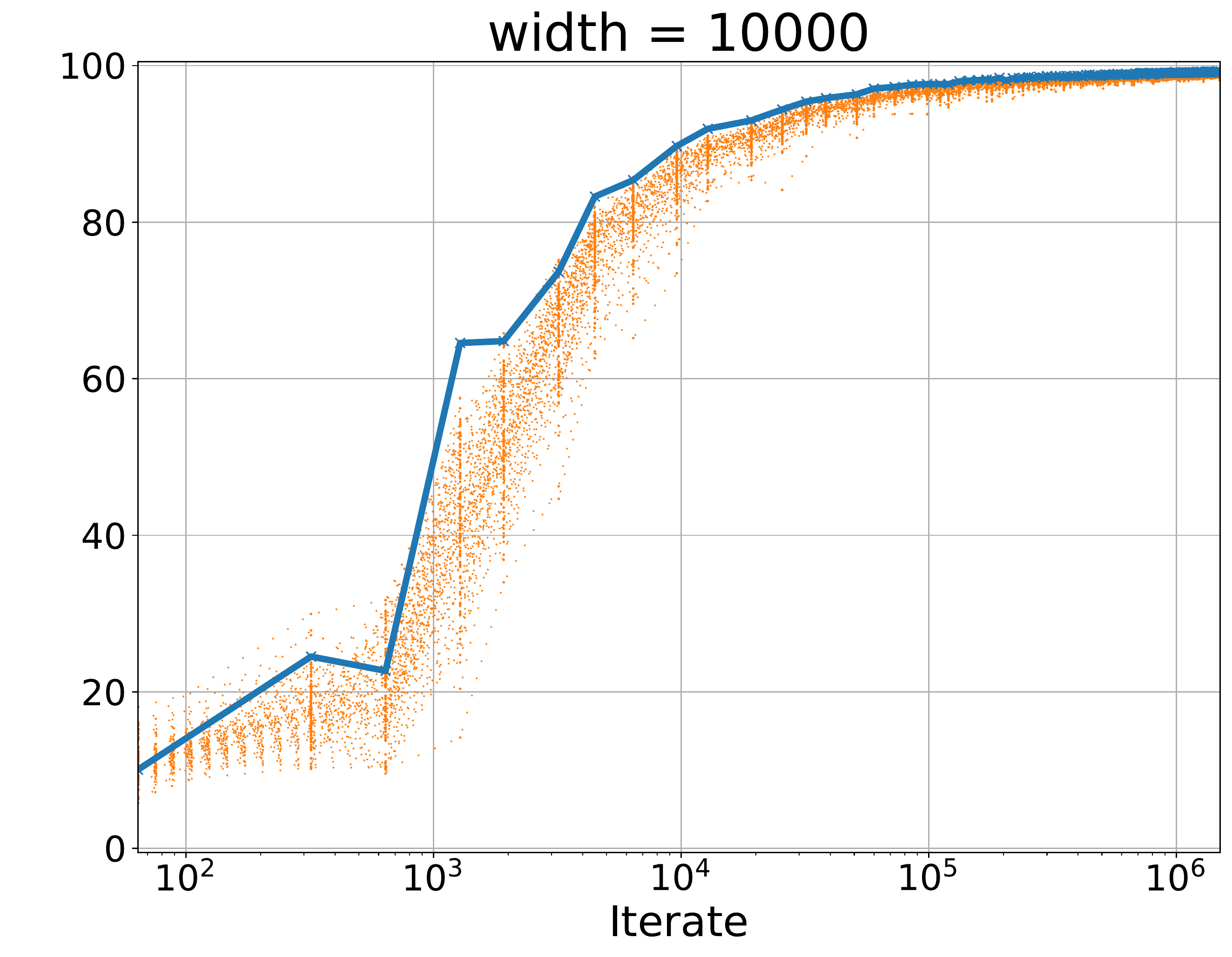}
&
\hspace*{-15pt} 
\includegraphics[width=0.261\textwidth]{./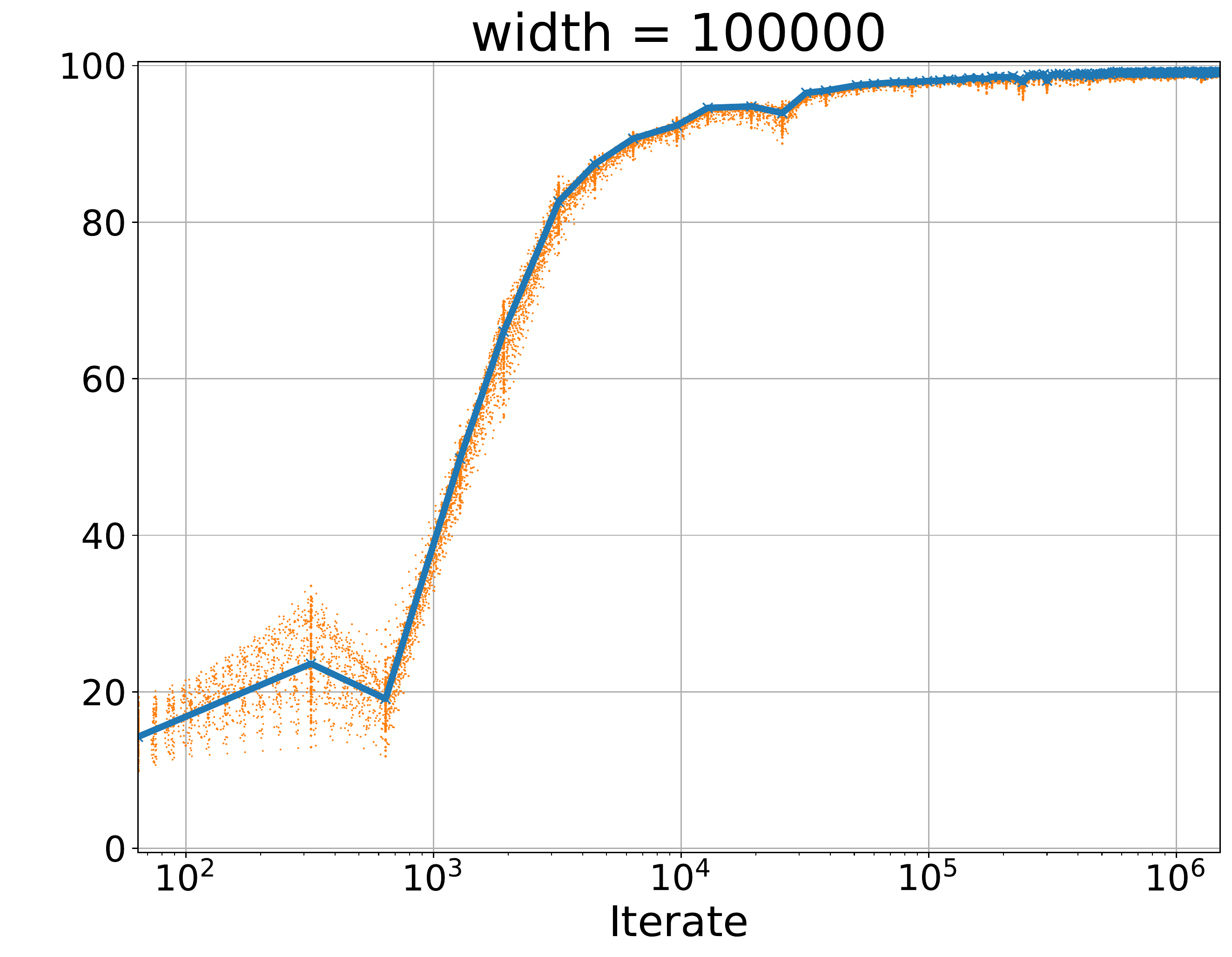}
\end{tabular}
\caption{\label{fig:full_independent}Test accuracy of the full network $f(\cdot;\W_t)$ as well as 100 random sub-networks $g(\W_t; \cdot, \B_1), \ldots, g(\W_t; \cdot, \B_{100})$ with dropout rate $1-p=0.8$, as a function of number of iterations $t$, for several width $m \in \{100,1K,10K,100K \}$.}
\vspace*{-15pt}
\end{figure*}

\section{Conclusion}\label{sec:disc}
Most of the results in the literature of over-parameterized neural networks focus on GD-based methods without any explicit regularization. On the other hand, recent theoretical investigations have challenged the view that \emph{implicit bias} due to such GD-based methods can explain  generalization in deep learning~\citep{dauber2020can}. Therefore, it seems crucial to explore algorithmic regularization techniques in over-parameterized neural networks. This paper takes a step towards understanding a popular algorithmic regularization technique in deep learning. In particular, assuming that the data distribution is separable in the RKHS induced by the neural tangent kernel, this paper presents precise iteration complexity results for dropout training in two-layer ReLU networks using the logistic loss.

\section*{Broader Impact}
We investigate the convergence and generalization of a popular algorithmic regularization technique in deep learning. Although we can not think of any direct social impacts per se, we hope such theoretical studies serve the community in long-term, by improving our understanding of the foundations, which shall eventually lead to more powerful machine learning systems.

\section*{Acknowledgements}
This research was supported, in part, by NSF BIGDATA award IIS-1546482, NSF CAREER award IIS-1943251 and NSF TRIPODS award CCF-1934979. Poorya Mianjy acknowledges support as a MINDS fellow. Raman Arora acknowledges support from the Simons Institute as part of the program on the Foundations of Deep Learning and the Institute for Advanced Study (IAS), Princeton, New Jersey, as part of the special year on Optimization, Statistics, and Theoretical Machine Learning.

\bibliographystyle{plainnat}
\bibliography{references}

\clearpage
\appendix
\begin{center}
{\bf\Large ``On Convergence and Generalization of Dropout Training''}
\end{center}
\vspace{30pt}

\section{Auxiliary Theorems}
\begin{theorem}[Gaussian Concentration~\cite{vershynin2018high}]\label{thm:gaussian_concentration}
Consider a random vector $\z \sim \cN(\0,\I_d)$ and a $\rho$-Lipschitz function $f:\R^{d}\to \R$ (with respect to the Euclidean metric). Then $f(\z)$ is $\rho$-sub-Gaussian and it holds for all $t \geq 0$:
\begin{equation*}
\prob\{{f(\z) - \E[f(\z)]} \geq t\} \leq e^{\frac{-t^2}{2\rho^2}}
\end{equation*}
\end{theorem}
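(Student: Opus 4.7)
The plan is to establish the sub-Gaussian moment generating function bound
\begin{equation*}
\E\bigl[e^{\lambda(f(\z)-\E f(\z))}\bigr] \leq e^{\lambda^2\rho^2/2}, \qquad \lambda \in \R,
\end{equation*}
from which the one-sided tail inequality follows by a standard Chernoff/Markov argument and optimization over $\lambda$. Since $f$ is only assumed Lipschitz (hence a.e.\ differentiable by Rademacher's theorem), my first step is a routine smoothing reduction: approximate $f$ by its Gaussian mollifications $f_\varepsilon := f \ast \phi_\varepsilon$, which are smooth, still $\rho$-Lipschitz, and converge to $f$ pointwise. If the MGF bound holds for each $f_\varepsilon$, Fatou's lemma together with $\E f_\varepsilon(\z)\to \E f(\z)$ transfers it to the limit $f$.

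With $f$ smooth and $\|\nabla f\|\leq \rho$ pointwise, the main ingredient I would invoke is the Gaussian logarithmic Sobolev inequality of Gross: for any sufficiently regular non-negative $g$,
\begin{equation*}
\E[g^2 \log g^2] - \E[g^2]\log \E[g^2] \leq 2\,\E\bigl[\|\nabla g\|^2\bigr].
\end{equation*}
Set $H(\lambda):=\E[e^{\lambda f(\z)}]$ and apply the LSI to $g=e^{\lambda f/2}$; using $\|\nabla e^{\lambda f/2}\|^2 \leq (\lambda^2\rho^2/4)\,e^{\lambda f}$, this yields
\begin{equation*}
\lambda H'(\lambda) - H(\lambda)\log H(\lambda) \leq \tfrac{\lambda^2 \rho^2}{2}\, H(\lambda).
\end{equation*}
Dividing by $\lambda^2 H(\lambda)$ gives $\tfrac{d}{d\lambda}\bigl(\tfrac{\log H(\lambda)}{\lambda}\bigr) \leq \tfrac{\rho^2}{2}$, and integrating from $0^+$ together with $\lim_{\lambda\to 0^+} \log H(\lambda)/\lambda = \E f(\z)$ produces $\log H(\lambda)\leq \lambda \E f(\z) + \lambda^2\rho^2/2$. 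Substituting into the Chernoff bound $\prob\{f(\z)-\E f(\z)\geq t\}\leq e^{-\lambda t}H(\lambda)e^{-\lambda \E f(\z)}$ and optimizing at $\lambda=t/\rho^2$ delivers the claimed $e^{-t^2/(2\rho^2)}$.

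The single genuinely nontrivial ingredient is the Gaussian log-Sobolev inequality itself; standard derivations proceed either via Bakry--\'{E}mery calculus for the Ornstein--Uhlenbeck semigroup (whose Ricci curvature is bounded below by $1$) or via tensorization of a scalar two-point inequality. A self-contained alternative I would otherwise pursue is the rotational interpolation argument: with $\z,\z'$ independent standard Gaussians and $\z(\theta)=\sin\theta\,\z+\cos\theta\,\z'$, differentiating $\theta\mapsto \E[e^{\lambda f(\z(\theta))/2}\,e^{-\lambda f(\z'(\theta))/2}]$ over $\theta\in[0,\pi/2]$ yields the sub-Gaussian MGF bound directly from $\|\nabla f\|\leq \rho$. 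Either route is textbook; the only delicate verification in both is the smoothing step, for which one checks that $f_\varepsilon$ inherits the Lipschitz constant via $|f_\varepsilon(x)-f_\varepsilon(y)|\leq \bE_{\z'}|f(x+\varepsilon\z')-f(y+\varepsilon\z')|\leq \rho\|x-y\|$.
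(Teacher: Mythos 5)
Your proposal is correct, but there is nothing in the paper to compare it against: Theorem~\ref{thm:gaussian_concentration} is imported verbatim from \cite{vershynin2018high} as a black-box auxiliary result, and the paper offers no proof. What you give is the standard Herbst argument: mollify to reduce to smooth $f$ with $\|\nabla f\|\leq\rho$, apply the Gaussian log-Sobolev inequality to $g=e^{\lambda f/2}$ to obtain $\lambda H'(\lambda)-H(\lambda)\log H(\lambda)\leq\tfrac{\lambda^2\rho^2}{2}H(\lambda)$, integrate the resulting bound on $\tfrac{d}{d\lambda}(\log H(\lambda)/\lambda)$, and finish with Chernoff. The calculus is right, and this route does deliver the sharp constant $e^{-t^2/(2\rho^2)}$ appearing in the statement (sharper than the unspecified absolute constant in Vershynin's own formulation), so the theorem as written is fully established modulo the LSI itself, which you correctly flag as the one nontrivial import. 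Two small points: (i) you should note that $H(\lambda)=\E[e^{\lambda f(\z)}]$ is finite for all $\lambda$ because a Lipschitz $f$ grows at most linearly while the Gaussian has super-exponential tails; this is what licenses applying the LSI to $g=e^{\lambda f/2}$ and differentiating under the expectation. (ii) Your proposed self-contained fallback via the Pisier--Maurey rotational interpolation would \emph{not} prove the theorem as stated: that argument bounds $\E[e^{\lambda(f(\z)-f(\z'))}]$ by $e^{\lambda^2\pi^2\rho^2/8}$ and hence yields only $e^{-2t^2/(\pi^2\rho^2)}$, which is strictly weaker than the claimed $e^{-t^2/(2\rho^2)}$. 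Since the paper only ever invokes the theorem with generous slack (e.g., the $\|\w_{r,1}\|\leq\sqrt{d}+2\sqrt{\ln{m}}$ bound and Lemma~\ref{lem:small_init}), either constant would suffice for its downstream use, but to match the statement you should rely on the LSI route.
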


\begin{theorem}[Hoeffding's inequality~\cite{vershynin2018high}]\label{thm:hoeffding}
Let $X_1, \ldots , X_n$ be independent, mean zero random variables. Assume that $X_i \in [m_i,M_i]$ for every $i$. Then, for every $t >  0$, we have
\begin{align*}
\prob\{ \sum_{i=1}^{n}X_i \geq t \} \leq e^{ -\frac{2 t^2}{\sum_{i=1}^{n}(m_i - M_i)^2}}
\end{align*}
\end{theorem}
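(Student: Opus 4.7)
The plan is to use the classical Cram\'er--Chernoff (exponential Markov) method together with Hoeffding's lemma for bounded mean-zero random variables. First, I would apply Markov's inequality to the exponentiated sum: for any $\lambda > 0$,
\begin{equation*}
\prob\Big\{\sum_{i=1}^n X_i \geq t\Big\} = \prob\big\{e^{\lambda \sum_i X_i} \geq e^{\lambda t}\big\} \leq e^{-\lambda t}\, \E\big[e^{\lambda \sum_i X_i}\big].
\end{equation*}
By independence of the $X_i$, the moment generating function of the sum factorizes as $\prod_{i=1}^n \E[e^{\lambda X_i}]$, reducing the problem to controlling a single bounded mean-zero variable.

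The key ingredient is Hoeffding's lemma: for a mean-zero random variable $X \in [m, M]$, one has $\E[e^{\lambda X}] \leq e^{\lambda^2 (M - m)^2 / 8}$. To prove this I would exploit convexity of $x \mapsto e^{\lambda x}$ to bound it pointwise on $[m,M]$ by the secant line, writing $e^{\lambda x} \leq \tfrac{M-x}{M-m}e^{\lambda m} + \tfrac{x-m}{M-m}e^{\lambda M}$. Taking expectations and using $\E[X]=0$, the right-hand side becomes a function of $\lambda(M-m)$ alone, which after the substitution $u := \lambda(M-m)$ and some algebra can be written as $e^{g(u)}$ for an explicit $g$ with $g(0)=g'(0)=0$. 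A direct calculation shows $g''(u) \leq 1/4$ (this is essentially the fact that the variance of a Bernoulli random variable on $\{0,1\}$ is at most $1/4$), and Taylor's theorem then yields $g(u) \leq u^2/8$, giving the lemma.

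Combining these steps,
\begin{equation*}
\prob\Big\{\sum_{i=1}^n X_i \geq t\Big\} \leq e^{-\lambda t} \prod_{i=1}^n e^{\lambda^2 (M_i - m_i)^2 / 8} = \exp\Big(-\lambda t + \tfrac{\lambda^2}{8}\sum_{i=1}^n (M_i - m_i)^2\Big).
\end{equation*}
This is a quadratic in $\lambda$; optimizing by choosing $\lambda = 4t / \sum_{i=1}^n (M_i - m_i)^2 > 0$ yields the claimed bound $\exp\big(-2t^2 / \sum_{i=1}^n (M_i - m_i)^2\big)$.

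The main obstacle is the proof of Hoeffding's lemma itself, specifically establishing $g''(u) \leq 1/4$ uniformly over $u \in \R$. Once this variance bound is in hand, the convex interpolation step, independence factorization, and optimization over $\lambda$ are all routine. Everything else is bookkeeping.
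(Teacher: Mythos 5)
Your proof is correct: the Chernoff (exponential Markov) bound, the independence factorization, Hoeffding's lemma via convex interpolation with the $g''(u)\leq 1/4$ bound, and the optimization $\lambda = 4t/\sum_{i=1}^n (M_i-m_i)^2$ all go through and yield exactly the stated bound. The paper itself does not prove this statement --- it imports it as an auxiliary theorem from the cited reference --- and your argument is precisely the standard proof given in that reference, so there is nothing to reconcile.
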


\begin{theorem}[Theorem~1 of \cite{beygelzimer2011contextual}]\label{thm:martingale}
Let $X_1, \ldots, X_T$ be a sequence of real-valued random variables. Let $\E_t[Y]:=\E[Y|X_1,\ldots,X_{t-1}]$. Assume, for all $t$, that $X_t \leq R$ and that $\E_t[X_t] = 0$. Define the random variable $S_t:=\sum_{k=1}^{t}X_t$, and $V_t:=\sum_{k=1}^{t}\E_k[X_k^2]$. Then for any $\delta > 0$, with probability at least $1 - \delta$, we have the following guarantee:
\begin{equation*}
S_t \leq R\ln{\frac{1}{\delta}} + (e-2)\frac{V_t}{R}
\end{equation*}
\end{theorem}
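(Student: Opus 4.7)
The plan is to prove this bound by the classical exponential-moment method: I would construct a nonnegative supermartingale tailored to the one-sided constraint $X_t \leq R$, and then apply Doob's maximal inequality. The only nontrivial arithmetic input I will need is the sharp numerical fact
\begin{equation*}
e^{x} \;\leq\; 1 + x + (e-2)\,x^2 \qquad \text{for all } x \leq 1,
\end{equation*}
which is tight both at $x = 0$ and at $x = 1$ and is precisely the reason the constant $e-2$ appears in the conclusion.

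With this inequality in hand, I would set $\lambda := 1/R$, so that the assumption $X_t \leq R$ ensures $\lambda X_t \leq 1$ and the above bound applies pointwise to $\lambda X_t$. Taking the conditional expectation $\E_t[\cdot]$ and using $\E_t[X_t] = 0$ yields
\begin{equation*}
\E_t\bigl[e^{\lambda X_t}\bigr] \;\leq\; 1 + (e-2)\lambda^2\,\E_t[X_t^2] \;\leq\; \exp\!\bigl((e-2)\lambda^2\,\E_t[X_t^2]\bigr),
\end{equation*}
where the second step is the familiar $1+y \leq e^y$. I would then define the process
\begin{equation*}
M_t \;:=\; \exp\!\bigl(\lambda S_t - (e-2)\lambda^2 V_t\bigr),
\end{equation*}
which by the previous display satisfies $\E_t[M_t] \leq M_{t-1}$, so that $(M_t)_{t \geq 0}$ is a nonnegative supermartingale with $M_0 = 1$.

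To conclude, Doob's maximal inequality for nonnegative supermartingales gives $\prob\{\max_{t \leq T} M_t \geq 1/\delta\} \leq \delta\,\E[M_0] = \delta$. On the complementary event, taking logarithms and substituting $\lambda = 1/R$ delivers $S_t \leq R\ln{1/\delta} + (e-2)V_t/R$ simultaneously for all $t \leq T$, which is exactly the stated inequality.

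I expect the main obstacle to be verifying the displayed numerical inequality on the full half-line $x \leq 1$, in particular on the negative side where a direct Taylor-series comparison $\sum_{n\geq 2} x^n/n! \leq x^2(e-2)$ fails due to sign alternation. The cleanest route is a short calculus argument on $h(x) := 1 + x + (e-2)x^2 - e^x$: one checks $h(0)=h(1)=0$ and $h'(0)=0$, and observes that $h''(x) = 2(e-2) - e^x$ changes sign exactly once, at $x_\star = \ln{2(e-2)} \in (0,1)$. Thus $h$ is convex on $(-\infty, x_\star]$ with a critical point at $x=0$, forcing $h \geq 0$ on that interval; and $h$ is concave on $[x_\star, 1]$ with nonnegative endpoint values, so on that interval $h$ lies above the nonnegative chord. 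Combined, $h \geq 0$ on $(-\infty,1]$, which is the required inequality.
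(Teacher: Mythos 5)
Your proposal is correct, and there is nothing in the paper to compare it against: the paper states this result as an auxiliary theorem imported verbatim from \cite{beygelzimer2011contextual} and gives no proof of it. Your argument is the standard (and essentially the original) proof of this Freedman-style bound: the pointwise inequality $e^{x}\leq 1+x+(e-2)x^{2}$ for $x\leq 1$, applied with $\lambda=1/R$ so that $\lambda X_t\leq 1$, combined with $\E_t[X_t]=0$ and $1+y\leq e^{y}$ to show that $M_t=\exp\bigl(\lambda S_t-(e-2)\lambda^{2}V_t\bigr)$ is a nonnegative supermartingale with $M_0=1$ (note $V_t$ is predictable, which your construction implicitly uses and which is what makes the conditioning step legitimate). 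Your calculus verification of the key inequality is also sound: $h(x)=1+x+(e-2)x^{2}-e^{x}$ is convex on $(-\infty,x_\star]$ with a critical point at $0$, hence nonnegative there, and concave on $[x_\star,1]$ with nonnegative endpoints, hence nonnegative there as well. If anything, your proof is slightly stronger than the statement: by invoking Doob's maximal inequality for nonnegative supermartingales rather than Markov's inequality applied to $\E[M_T]\leq 1$ at the terminal time, you obtain the bound simultaneously for all $t\leq T$ on a single event of probability $1-\delta$, whereas the stated guarantee (and its use in the proof of Theorem~\ref{thm:sub-network}) only requires it at a single time.
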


\section{Proofs}
The following Lemma bounds $\abs{R_t}$, where $R_t:=\{r\in[m] | \  \bI\{ \w_{r,t}^\top\x_t >0 \} \neq \bI\{\w_{r,1}^\top\x_t > 0\}  \}$ is the set of hidden nodes at time $t$ whose activation on sample $\x_t$ is different from the initialization.
\begin{lemma}\label{lem:lazy}
Assume that $\| \w_{r,1} - \w_{r,t}\| \leq D$ holds for all $r\in [m]$, where $D$ is a positive constant. Then, with probability at least $1-\frac{\delta}{3}$, we have that
\begin{equation*}
| R_t  | \leq mD + \sqrt\frac{m\ln{3T/\delta}}{2}, \text{ for all } t \in [T].
\end{equation*}
\end{lemma}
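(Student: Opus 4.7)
The plan is to control $|R_t|$ by replacing the trajectory-dependent event that neuron $r$'s activation has flipped with a simpler event depending only on the initial weights, and then applying Gaussian anti-concentration together with Hoeffding's inequality. First, I would observe that if $r \in R_t$, then the signs of $\w_{r,1}^\top \x_t$ and $\w_{r,t}^\top \x_t$ differ, which forces
\begin{equation*}
|\w_{r,1}^\top \x_t| \leq |\w_{r,1}^\top \x_t - \w_{r,t}^\top \x_t| \leq \|\w_{r,1} - \w_{r,t}\|\, \|\x_t\| \leq D,
\end{equation*}
using the hypothesis $\|\w_{r,1}-\w_{r,t}\|\leq D$ along with the standard normalization $\|\x_t\|\leq 1$. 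Defining $S_t := \{r\in[m] : |\w_{r,1}^\top \x_t|\leq D\}$, we thus have $R_t\subseteq S_t$, and it suffices to bound $|S_t|$.

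Next, I would condition on the data $\cS_T$. Since the rows $\w_{r,1}\sim \cN(\0,\I_d)$ are drawn at initialization independently of the samples and independently of each other, conditionally on $\x_t$ the projections $\w_{r,1}^\top \x_t$ are independent Gaussians with variance $\|\x_t\|^2\leq 1$. Using the standard anti-concentration estimate $\prob\{|Z|\leq D\}\leq \sqrt{2/\pi}\,D \leq D$ for $Z\sim \cN(0,1)$ (the density of $Z$ is bounded by $1/\sqrt{2\pi}$), we get $\prob\{r\in S_t \mid \x_t\}\leq D$ for each $r$, and therefore $\E[|S_t| \mid \x_t] \leq mD$. Crucially, conditional on $\x_t$ the indicators $\{\bI\{r\in S_t\}\}_{r=1}^m$ are independent Bernoulli variables, each bounded in $[0,1]$.

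Applying Hoeffding's inequality (Theorem~\ref{thm:hoeffding}) to the centered indicators then yields that, conditionally on $\x_t$, with probability at least $1 - \delta/(3T)$,
\begin{equation*}
|S_t| - \E[|S_t|\mid \x_t] \leq \sqrt{m \ln(3T/\delta)/2},
\end{equation*}
and hence $|S_t| \leq mD + \sqrt{m\ln(3T/\delta)/2}$. Since this bound holds conditionally for every realization of $\x_t$, it holds unconditionally with the same probability. A union bound over $t\in[T]$ gives the claim with probability at least $1-\delta/3$ simultaneously for all iterations, and the inclusion $R_t\subseteq S_t$ transfers the bound to $|R_t|$. The main technical subtlety is the independence structure: one must condition on the data, so that only the initialization randomness remains and the events $\{r\in S_t\}$ become independent across $r$; the relaxation $R_t\subseteq S_t$ is precisely what decouples $\w_{r,t}$ from the tail bound, since otherwise $R_t$ would inherit complicated dependencies from the dropout trajectory.
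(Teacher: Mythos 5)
Your proposal is correct and follows essentially the same route as the paper's proof: reduce the event $r\in R_t$ to the initialization-only event $|\w_{r,1}^\top\x_t|\leq D$, apply Gaussian anti-concentration to bound its probability by $2D/\sqrt{2\pi}\leq D$, sum the indicators via Hoeffding's inequality, and union bound over $t\in[T]$. The only difference is cosmetic — you make the conditioning on the data and the independence across $r$ explicit, which the paper leaves implicit.
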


\begin{proof}[Proof of Lemma~\ref{lem:lazy}]
Assume that $r \in R_t$. Then it holds that
\begin{align*}
| \w_{r,1}^\top \x_t | &\leq | \w_{r,1}^\top \x_t | + | \w_{r,t}^\top \x_t | \\
&= | (\w_{r,1} - \w_{r,t})^\top \x_t  | \tag{$r \in R_t$}\\
&\leq \|\w_{r,1} - \w_{r,t}\| \|\x_t\| \tag{Cauchy-Schwarz} \\
&= \| \w_{r,1} - \w_{r,t}\| \leq D \tag{$\|\x_t\|=1$}
\end{align*}
Since $\w_{r,1}^\top\x_t$ is a standard Gaussian random variable, by anti-concentration property of the Gaussian distribution, $\E[\bI\{\abs{\w_{r,1}^\top\x_t}\leq D \}] = \Pr\{\abs{\w_{r,1}^\top\x_t}\leq D \} \leq \frac{2D}{\sqrt{2\pi}}$.
On the other hand, we have that
\begin{align*}
|R_t| = \abs{\{r | \ \bI\{ \w_{r,t}^\top\x_t >0 \} \neq \bI\{\w_{r,1}^\top\x_t > 0\}  \}} &\leq | \{r | \ \abs{\w_{r,1}^\top\x_t}\leq D  \} | = \sum_{r=1}^{m}\bI\{\abs{\w_{r,1}^\top\x_t}\leq D \}
\end{align*}
By Hoeffding's inequality, we have the following with probability at least $1-\frac{\delta}{3T}$:
\begin{align*}
\frac1m \sum_{r=1}^{m}\bI\{\abs{\w_{r,1}^\top\x_t}\leq D \} \leq \Pr\{\abs{\w_{r,1}^\top\x_t}\leq D \} + \sqrt\frac{\ln{3 T/\delta}}{2m} \leq \frac{2D}{\sqrt{2\pi}} + \sqrt\frac{\ln{3 T/\delta}}{2m}.
\end{align*}
Multiplying both sides by $m$ and applying union bound on $t\in[T]$ completes the proof.
\end{proof}

\begin{lemma}\label{lem:expected}
For any $t\in[T]$, let $\cB_{t}:=\{\B_1,\ldots,\B_{t}\}$ denote the set of Bernoulli masks up to time $t$. Then it holds almost surely that:
\begin{equation}\label{eq:jensens}
\sum_{t=1}^{T} \ell(y_t f_t(q\W_t))\leq \E_{\cB_T}[\sum_{t=1}^{T} L_t(\W_t)].
\end{equation}
\end{lemma}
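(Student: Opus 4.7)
The plan is to reduce Lemma~\ref{lem:expected} to a per-step conditional Jensen's inequality, and then sum and use the tower property. The starting observation is that because the ReLU $\sigma$ is positively homogeneous, $\sigma(q\W_t \x_t) = q\,\sigma(\W_t \x_t)$, which gives
\begin{equation*}
f_t(q\W_t) \;=\; \tfrac{1}{\sqrt m}\,\a^\top \sigma(q\W_t \x_t) \;=\; \tfrac{q}{\sqrt m}\,\a^\top \sigma(\W_t \x_t).
\end{equation*}
By construction of Algorithm~\ref{alg:dropout}, $\W_t$ is $\cB_{t-1}$-measurable, and $\B_t$ is drawn independently of $\cB_{t-1}$ with $\E[\B_t] = q\I$. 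Hence the key conditional-expectation identity
\begin{equation*}
\E[g_t(\W_t) \mid \cB_{t-1}] \;=\; \tfrac{1}{\sqrt m}\,\a^\top \E[\B_t]\,\sigma(\W_t \x_t) \;=\; f_t(q\W_t)
\end{equation*}
holds almost surely. This is the step that allows the scaled full network at test time to be interpreted as the conditional mean of the random sub-network.

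Second, I would apply the conditional Jensen's inequality to the convex logistic loss $\ell$: pointwise in $\cB_{t-1}$,
\begin{equation*}
\ell(y_t f_t(q\W_t)) \;=\; \ell\bigl(\E[y_t g_t(\W_t) \mid \cB_{t-1}]\bigr) \;\leq\; \E[\ell(y_t g_t(\W_t)) \mid \cB_{t-1}] \;=\; \E[L_t(\W_t) \mid \cB_{t-1}].
\end{equation*}
Summing this bound over $t \in [T]$ yields, almost surely,
\begin{equation*}
\sum_{t=1}^{T} \ell(y_t f_t(q\W_t)) \;\leq\; \sum_{t=1}^{T} \E[L_t(\W_t) \mid \cB_{t-1}].
\end{equation*}

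Finally, since $L_t(\W_t)$ is $\cB_t$-measurable, the tower property gives $\E_{\cB_T}[L_t(\W_t)] = \E_{\cB_{t-1}}\bigl[\E[L_t(\W_t) \mid \cB_{t-1}]\bigr]$; integrating the previous display over the remaining Bernoulli randomness, or equivalently identifying $\E[L_t(\W_t)\mid\cB_{t-1}]$ with the marginal expectation $\E_{\B_t}[L_t(\W_t)]$ used in the proof sketch of Theorem~\ref{thm:overall}, recovers exactly the right-hand side of~\eqref{eq:jensens}. I do not expect any substantive obstacle: the argument is a clean application of homogeneity plus Jensen's inequality plus the tower property, and the only point requiring care is to verify that $\W_t$ is $\cB_{t-1}$-measurable and that $\B_t$ is independent of $\cB_{t-1}$, both of which are immediate from the update rule in Algorithm~\ref{alg:dropout}.
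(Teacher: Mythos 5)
Your proof is correct and follows essentially the same route as the paper's: condition on $\cB_{t-1}$ (the smoothing/tower property), use $\E[\B_t]=q\I$ together with positive homogeneity of the ReLU to identify $f_t(q\W_t)$ as the conditional mean of $g_t(\W_t)$, and apply Jensen's inequality to the convex logistic loss. If anything, you are slightly more explicit than the paper about the $\cB_{t-1}$-measurability of $\W_t$ and about the final passage from the conditional expectation $\E[L_t(\W_t)\mid\cB_{t-1}]$ to the unconditional $\E_{\cB_T}$ on the right-hand side, a step the paper's last line elides.
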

\begin{proof}[Proof of Lemma~\ref{lem:expected}]
For any $a,b\in\R$, the function $\ell(z)=\log(1+\exp(az+b))$ is convex in $z$. We have the following inequalities: 
\begin{align*}
\E_{\cB_T}[\sum_{t=1}^{T} L_t(\W_t)] &= \sum_{t=1}^{T}\E_{\cB_t}[\ell(y_t \cdot \frac{1}{\sqrt{m}} \a^\top \B_t\sigma(\W_t\x_t))] \\
&= \sum_{t=1}^{T}\E_{\cB_{t-1}}[\E_{\B_t}\ell(y_t \cdot \frac{1}{\sqrt{m}} \sum_{r=1}^{m} a_r b_{r,t} \sigma(\w_{r,t}^\top \x_t)) | \cB_{t-1}] \tag{smoothing property} \\
&\geq \sum_{t=1}^{T}\E_{\cB_{t-1}}[\ell(y_t \cdot \frac{1}{\sqrt{m}} \sum_{r=1}^{m} a_r \E_{\B_t}[b_{r,t}] \sigma(\w_{r,t}^\top \x_t)) | \cB_{t-1}] \tag{Jensen's inequality} \\
&= \sum_{t=1}^{T}\ell(y_t \cdot \frac{1}{\sqrt{m}} \sum_{r=1}^{m} a_r \sigma(q\w_{r,t}^\top \x_t)) \tag{$\E[b_{r,t}] = q$, homogeneity of ReLU} \\
&= \sum_{t=1}^{T}\ell(y_t f_t(q\W_t))
\end{align*}
which completes the proof.
\end{proof}

\begin{proof}[Proof of Lemma~\ref{lem:lyapunov}]
Using the dropout update rule in Algorithm~\ref{alg:dropout}, we start by analyzing the distance of consecutive iterates from the reference point $\U$, assuming that $\Pi_c(\U)=\U$:
\begin{align*}
\|\W_{t+1}-\U\|_F^2 &= \|\Pi_c(\W_{t+\frac12})-\U\|_F^2\\
&\leq \|\W_{t+\frac12}-\U\|_F^2 \tag{$\U \in \cW_c$}\\
&= \|\W_{t} - \eta \nabla L_t(\W_t) -\U\|_F^2\\
&= \|\W_{t} - \U \|_F^2 -2 \eta \langle \nabla L_t(\W_t), \W_t -\U\rangle + \eta^2 \|\nabla L_t(\W_t)\|_F^2
\end{align*}
The last term on the right hand side above is bounded as follows:
\begin{align*}
\eta^2 \|\nabla L_t(\W_t)\|_F^2 &= \eta^2 \| \ell'(y_t g_t(\W_t)) y_t \nabla g_t(\W_t)\|_F^2 \\
&= \eta^2 \left(-\ell'(y_t g_t(\W_t))\|\nabla g_t(\W_t)\|_F \right)^2 \\
&= \eta^2 Q_t(\W_t)^2 \sum_{r=1}^{m} \| \frac{\partial g_t(\W_t)}{\partial \w_{r,t}}\|^2   \\
&\leq \eta^2  Q_t(\W_t)^2 \tag{$\| \frac{\partial g_t(\W_t)}{\partial \w_{r,r}}\|\leq \frac{1}{\sqrt{m}}$}\\
&\leq \frac{\eta^2}{\ln{2}}  Q_t(\W_t) \tag{$ Q_t(\cdot) \leq 1/\ln{2}$}\\
&\leq \eta  Q_t(\W_t) \tag{assumption $\eta \leq \ln{2}$}\\
&\leq \eta L_t(\W_t) \tag{$ Q_t(\cdot)\leq L_t(\cdot)$}
\end{align*}
The second term can be bounded as follows:
\begin{align*}
\langle \nabla L_t(\W_t), \W_t -\U \rangle &= \ell'(y_t g_t(\W_t)) \langle y_t \nabla g_t(\W_t) , \W_t-\U \rangle\\
&= \ell'(y_t g_t(\W_t))(y_t g_t(\W_t) - y_t g_t^{(t)}(\U)) \tag{Homogeneity, definition of $g_t^{(t)}$} \\
&\geq (\ell(y_t g_t(\W_t)) -\ell(y_t g_t^{(t)}(\U))) \tag{convexity of $\ell(\cdot)$} \\
&= L_t(\W_t) - L^{(t)}_t(\U)
\end{align*}
Plugging back the above inequalities we get
\begin{align}
\|\W_{t+1}-\U\|_F^2 \leq \|\W_{t+\frac{1}{2}}-\U\|_F^2 &\leq \|\W_{t} - \U \|_F^2 -2 \eta (L_t(\W_t) - L^{(t)}_t(\U)) + \eta L_t(\W_t) \nonumber \\
&= \|\W_{t} - \U \|_F^2 - \eta L_t(\W_t) +2\eta L^{(t)}_t(\U) \label{eq:distance_ub}
\end{align}
Rearranging, dividing both sides by $\eta$, and averaging over iterates we arrive at
\begin{align*}
\frac1T\sum_{t=1}^{T}L_t(\W_t) &\leq \sum_{t=1}^{T}\frac{\|\W_{t} - \U \|_F^2 - \|\W_{t+1}-\U\|_F^2}{\eta T} + \frac2T\sum_{t=1}^{T} L^{(t)}_t(\U) \\
&\leq \frac{\|\W_1 - \U \|_F^2}{\eta T} + \frac2T\sum_{t=1}^{T}  L^{(t)}_t(\U) \tag{Telescopic sum}
\end{align*}
\end{proof}

\begin{lemma}\label{lem:small_init}
With probability at least $1-\delta/3$ it holds uniformly over all $t\in[T]$ that $|g_t(\W_1)|\leq \sqrt{2\ln{6T/\delta}}$, provided that $m \geq 25 \ln{6T/\delta}$.
\end{lemma}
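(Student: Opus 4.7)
The plan is to fix $t\in[T]$, condition on $\W_1$ and $\B_t$, apply Hoeffding to the resulting sum over the i.i.d.\ Rademacher signs $\{a_r\}$, and then control the resulting variance proxy by Gaussian concentration. Writing $g_t(\W_1) = \frac{1}{\sqrt m}\sum_{r=1}^m a_r \xi_r$ with $\xi_r := b_{r,t}\sigma(\w_{r,1}^\top \x_t)\geq 0$, the summands are independent and take values in $[-\xi_r/\sqrt m,\xi_r/\sqrt m]$ conditional on $(\W_1,\B_t)$. Hoeffding's inequality (Theorem~\ref{thm:hoeffding}), combined with a two-sided union bound, therefore gives
$$\prob(|g_t(\W_1)|\geq v \mid \W_1,\B_t) \,\leq\, 2\exp\left(-\frac{v^2 m}{2 S_t}\right), \qquad S_t := \sum_{r=1}^m \xi_r^2,$$
so choosing $v=\sqrt{2\ln{6T/\delta}}$ would conclude the proof for a single $t$ if we had the variance-proxy bound $S_t\leq m$.

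The core step is to establish $S_t\leq m$ with high probability. Since $b_{r,t}\in\{0,1\}$, $S_t\leq F(\W_1)^2$ where $F(\W_1):=(\sum_r \sigma(\w_{r,1}^\top \x_t)^2)^{1/2}$. Since $\sigma$ is $1$-Lipschitz and $\|\x_t\|=1$, the map $F$ is $1$-Lipschitz in the Frobenius norm on $\R^{md}$ (use $|F(\W)-F(\W')|\leq\|\sigma(\W\x_t)-\sigma(\W'\x_t)\|\leq\|(\W-\W')\x_t\|\leq\|\W-\W'\|_F$), and Jensen gives $\E[F(\W_1)]\leq\sqrt{\E[\sum_r \sigma(\w_{r,1}^\top\x_t)^2]}=\sqrt{m/2}$ using $\E[\sigma(z)^2]=\tfrac12$ for $z\sim\cN(0,1)$. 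Gaussian concentration (Theorem~\ref{thm:gaussian_concentration}) applied to the standard normal vector $(\w_{r,1})_{r\in[m]}$ then yields
$$\prob(S_t>m)\,\leq\,\prob\left(F(\W_1)-\E[F(\W_1)]>(1-1/\sqrt 2)\sqrt m\right)\,\leq\,\exp\left(-\tfrac{1}{2}(1-1/\sqrt{2})^2 m\right),$$
and because $(1-1/\sqrt 2)^2/2 > 1/25$, the hypothesis $m\geq 25\ln{6T/\delta}$ makes this at most $\delta/(6T)$.

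Combining the two displays via the law of total probability bounds $\prob(|g_t(\W_1)|\geq \sqrt{2\ln{6T/\delta}})$ by $O(\delta/T)$ for each fixed $t$, and a union bound over $t\in[T]$ yields the stated uniform conclusion (the exact $1-\delta/3$ constant is obtained by a standard reallocation of the failure budget between the Hoeffding and the Gaussian-concentration steps, e.g.\ sharpening either one by a logarithmic factor). The main obstacle is the second step: because the variance proxy $S_t$ involves squared Gaussians it is only moderately concentrated, so the argument crucially exploits the ReLU clipping — which halves the mean to $\E[S_t]\leq m/2$ — to create enough headroom to push $S_t$ below $m$. It is precisely this slack that dictates the width requirement $m\geq 25\ln{6T/\delta}$.
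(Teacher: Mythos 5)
Your proof is correct and follows essentially the same route as the paper's: both control the hidden-layer output norm $\norm{\sigma(\W_1\x_t)}$ via its $1$-Lipschitz dependence on $\W_1$, Gaussian concentration, and $\E[\sigma(z)^2]=\tfrac12$ (yielding the identical width condition $m\geq 2(1-1/\sqrt{2})^{-2}\ln{6T/\delta}$), and then apply Hoeffding over the Rademacher signs $a_r$ conditional on that event; you merely present the two concentration steps in the opposite order and drop the Bernoulli mask via $b_{r,t}^2\leq 1$ where the paper carries it along. The only discrepancy is a factor-of-two bookkeeping issue in the two-sided tail probabilities, which you correctly flag as a reallocation of the failure budget and which the paper's own proof glosses over in the same way.
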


\begin{proof}[Proof of Lemma~\ref{lem:small_init}] The proof is similar to the proof of Lemma~{A.1} in~\cite{ji2019polylogarithmic}, except for that we have to take into account the randomness due to dropout as well. In particular, there are four different sources of randomness in $g_t(\W_1) = g(\W_1;\x_t,\B_t)$: 1) the randomly initialized hidden layer weights $\W_1$, 2) the randomly initialized top layer weights $\a$, 3) the input vector $\x_t, \ t \in [T]$, and 4) the Bernoulli masks $\B_t, \ t \in [T]$. Given input $\x_t$ and the dropout mask $\B_t$, let $\h_{t}(\W) = \frac{1}{\sqrt{m}}\B_t\sigma(\W\x_t)\in\R^m$ denote the (scaled) output of the dropout layer with hidden weights $\W$. It is easy to see that the function $g:\W \mapsto \|\h_{t}(\W)\|$ is $1$-Lipschitz:
\begin{align*}
\abs{g(\W) - g(\W')} &=
\abs{\|\h_{t}(\W)\| - \|\h_{t}(\W')\|} \\
&\leq \|\h_{t}(\W) - \h_{t}(\W')\| \tag{Reverse Triangle Inequality}\\
&= \sqrt{\sum_{r=1}^{m}(\frac{1}{\sqrt{m}}b_i^{(t)}\sigma(\langle\w_{r,1},\x_t\rangle) - \frac{1}{\sqrt{m}}b_i^{(t)}\sigma(\langle\w'_{r,1},\x_t\rangle))^2}\\
&= \frac{\sqrt{\sum_{r=1}^{m}(\langle\w_{r,1},\x_t\rangle - \langle\w'_{r,1},\x_t\rangle)^2}}{\sqrt{m}} \tag{1-Lipschitzness of ReLU}\\
&\leq \frac{\sqrt{\sum_{r=1}^{m}\|\w_{r,1}-\w'_{r,1}\|^2 \|\x_t\|^2}}{\sqrt{m}} \tag{Cauchy-Schwarz}\\
&= \frac{\|\W-\W'\|_F}{\sqrt{m}}
\end{align*}
Using Gaussian concentration (Lemma~\ref{thm:gaussian_concentration}), we get that $\|\h_{t}(\W_1)\| - \E_{\W_1}[\|\h_{t}(\W_1)\|] \leq \sqrt\frac{{2\ln{\frac{6T}{\delta}}}}{{m}}$ with probability at least $1-\frac{\delta}{6T}$. It also holds that:
\begin{align*}
\E_{\W_1}[\|\h_{t}(\W_1)\|] &\leq \sqrt{\E_{\W_1}[\|\h_{t}(\W_1)\|^2]}\\
&= \sqrt{\sum_{r=1}^{m}\E_{\w_{r,1}}(\frac{1}{\sqrt{m}}b_{r,t}\sigma(\w_{r,1}^\top\x_t))^2}\\
&\leq \sqrt\frac{{\sum_{r=1}^{m}\E_{\w_{r,1}}[\sigma(\w_{r,1}^\top\x_t)^2]}}{{m}}\\
&= \sqrt{\E_{z\sim\cN(0,1)}[\sigma(z)^2]} = \frac{1}{\sqrt{2}}
\end{align*}
As a result, we have with probability at least $1-\frac{\delta}{6T}$ that $\|\h_{t}(\W_1)\| \leq \sqrt\frac{2\ln{6T/\delta}}{m} + \frac{\sqrt{2}}{2} \leq 1$ whenever $m \geq 25 \ln{6T/\delta}$. Now, taking a union bound over all $t\in [T]$, we get that $\|\h_{t}(\W_1)\| \leq 1$ holds simultaneously for all iterates. Conditioned on this event, the random variable $g_t(\W_1) = \langle \a, \h_{t}(\W_1) \rangle $ is zero mean and $1$-sub-Gaussian, so that by the general Hoeffding's inequality, for any $t$, with probability at least $1-\frac{\delta}{6T}$, it holds that $\abs{g_t(\W_1)}\leq \sqrt{2\ln{6T/\delta}}$. Taking union bound over all $t\in[T]$, with probability $1-\delta/6$ it holds that $\abs{g_t(\W_1)}\leq \sqrt{2\ln{6T/\delta}}$ simultaneously for all $t\in[T]$. Finally, the probability that both of these events hold is no less than $(1-\delta/6)^2\geq 1-\delta/3 $, which completes the proof.
\end{proof}

\begin{lemma}\label{lem:good_init}
Under Assumption~\ref{ass:margin}, for any $\delta\in(0,1)$, with probability at least $1-\delta/3$ it holds uniformly for all $t \in [T]$ that:
\begin{equation*}
y_t g_t^{(1)}(\V) = y_t \langle \nabla g_t(\W_1), \V\rangle \geq \gamma - \sqrt{\frac{2\ln{3T/\delta}}{m}}
\end{equation*}
\end{lemma}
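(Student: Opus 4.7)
My plan is to exploit the explicit form of $\V$ relative to $\nabla g_t(\W_1)$ to rewrite $y_t \langle \nabla g_t(\W_1), \V \rangle$ as an empirical mean of $m$ i.i.d.\ bounded random variables whose expectation is at least $\gamma$, and then to invoke Hoeffding's inequality followed by a union bound over $t \in [T]$.

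First I would expand the inner product term by term. Using $\partial g_t / \partial \w_r = \frac{1}{\sqrt{m}} a_r b_{r,t} \bI\{\w_{r,1}^\top \x_t \geq 0\} \x_t$, $\v_r = \frac{1}{\sqrt{m}} a_r \psi(\w_{r,1})$, and $a_r^2 = 1$, the cross terms collapse into
\begin{equation*}
y_t \langle \nabla g_t(\W_1), \V \rangle \;=\; \frac{1}{m} \sum_{r=1}^{m} b_{r,t}\, y_t\, \langle \psi(\w_{r,1}), \x_t \bI\{\w_{r,1}^\top \x_t \geq 0\} \rangle,
\end{equation*}
which is precisely the finite-width approximation highlighted in the discussion after Assumption~\ref{ass:margin}. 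Denote the $r$-th summand by $X_r^{(t)}$. Conditionally on $(\x_t,y_t)$, the variables $X_1^{(t)},\ldots,X_m^{(t)}$ are i.i.d.\ across $r$, since the pairs $(\w_{r,1},b_{r,t})$ are independent across $r$ and identically distributed as $(\z,b)$ in Assumption~\ref{ass:margin}. By Cauchy--Schwarz with $\|\psi(\w_{r,1})\|_2 \leq 1$ (part B) and $\|\x_t\| = 1$, we have $|X_r^{(t)}| \leq 1$; meanwhile part C gives $\E[X_r^{(t)}] \geq \gamma$ for almost every $(\x_t,y_t) \sim \cD$.

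For each fixed $t$, the mean-zero shifts $X_r^{(t)} - \E[X_r^{(t)}]$ lie in an interval of length at most $2$, so Hoeffding's inequality (Theorem~\ref{thm:hoeffding}) yields
\begin{equation*}
\prob\Bigl\{\frac{1}{m}\sum_{r=1}^{m} X_r^{(t)} < \gamma - \sqrt{\frac{2\ln(3T/\delta)}{m}}\Bigr\} \;\leq\; \frac{\delta}{3T},
\end{equation*}
as can be verified by solving $\exp(-s^2/(2m)) = \delta/(3T)$ for the absolute deviation $s$ and dividing by $m$. A union bound over $t \in [T]$ then delivers the advertised uniform lower bound with probability at least $1-\delta/3$. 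Conceptually the argument is routine; the main bookkeeping subtlety is keeping straight which quantities carry the probability (the initialization $\W_1$ and the Bernoulli masks $\{\B_t\}$) versus which are conditioned on (the training data $\{(\x_t,y_t)\}$), and invoking the ``almost all'' qualifier of Assumption~\ref{ass:margin} to sidestep any $\cD$-measure-zero set on which the margin condition could fail.
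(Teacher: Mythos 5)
Your proposal is correct and follows essentially the same route as the paper's proof: both rewrite $y_t\langle \nabla g_t(\W_1),\V\rangle$ as the empirical mean $\frac1m\sum_{r=1}^m y_t\langle\psi(\w_{r,1}), b_{r,t}\x_t\bI\{\w_{r,1}^\top\x_t>0\}\rangle$ of i.i.d.\ summands bounded in $[-1,1]$ with mean at least $\gamma$ (Assumption~\ref{ass:margin}C), then apply Hoeffding's inequality at level $\delta/(3T)$ and a union bound over $t$. Your explicit bookkeeping of which randomness is conditioned on and the ``almost all'' caveat is slightly more careful than the paper's write-up, but the argument is the same.
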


\begin{proof}[Proof of Lemma~\ref{lem:good_init}]
By Assumption~\ref{ass:margin}, it holds that $\E_{\z, b} [y \langle \psi(\z), b \x \bI\{\z^\top\x > 0\}\rangle ] \geq \gamma$ for all $(\x,y)$ in the domain of $\cD$. We observe that $y_t g_t^{(1)}(\V)$ is an empirical estimate of this quantity:
\begin{align*}
y_t g_t^{(1)}(\V) &= y_t \langle \nabla g_t(\W_1), \V\rangle \\
&= y_t \sum_{r=1}^{m}\langle \frac{1}{\sqrt{m}}a_r b_{r,t}\bI\{\x_t^\top \w_{r,1} > 0\}\x_t, \frac{1}{\sqrt{m}}a_r \psi(\w_{r,1})\rangle \\
&= \frac{1}{m}\sum_{r=1}^{m}y_t \langle \psi(\w_{r,1}), b_{r,t} \x_t \bI\{\w_{r,1}^\top\x_t > 0\} \rangle
\end{align*}
For $t,r\in [T] \times [m]$, let $\gamma_{t,r} := y_t \langle \psi(\w_{r,1}), b_{r,t} \x_t \bI\{\w_{r,1}^\top\x_t > 0\} \rangle$. Note that $\E_{\W_1,\B_t}[\gamma_{t,r}] =  \E_{\z, b}[y_t \langle \psi(\z), \b \x_t \bI\{\z^\top\x_t > 0\} \rangle]$. Also, for any $t$, the random variable $\gamma_{t,r}$ is bounded almost surely as follows:
\begin{align*}
\abs{\gamma_{t,r}} 
&\leq \abs{y_t}  \|\psi(\w_{r,1})\| \abs{b_{r,t}} \|\x_t\| \abs{\bI\{\w_{r,1}^\top\x_t > 0\}}  \leq 1.
\end{align*}
Therefore by Hoeffding’s inequality (Theorem~\ref{thm:hoeffding}), with probability at least $1 - \frac{\delta}{3T}$, it holds that:
\begin{equation*}
y_t g_t^{(1)}(\V) - \gamma \geq y_t g_t^{(1)}(\V) - \E[y_t g_t^{(1)}(\V)] \geq - \sqrt{\frac{2\ln{3T/\delta}}{m}}
\end{equation*}
Applying a union bound over $t$ finishes the proof.
\end{proof}

\begin{proof}[Proof of Lemma~\ref{lem:loss_ub}]
We adopt the proof of Theorem~2.2 in~\cite{ji2019polylogarithmic} for dropout training. Assume that $\|\w_{r,t}-\w_{r,1}\|\leq \frac{7 \lambda}{2\gamma\sqrt{m}}$ holds for the first $T$ iterates of Algorithm~\ref{alg:dropout}. Then with probability at least $1-(\frac{\delta}{3}+\frac{\delta}{3}+\frac{\delta}{3}) = 1-\delta$, Lemma~\ref{lem:lazy}, Lemma~\ref{lem:small_init}, and Lemma~\ref{lem:good_init} hold simultaneously. We first prove that $L^{(t)}_t(\U) \leq \frac{\lambda^2}{2\eta T}$ for all $t\in[T]$. Using the inequality $\log(1+z)\leq z$, we get that 
\begin{equation*}
L^{(t)}_t(\U)
= \log(1+e^{-y_t \langle \nabla g_t(\W_t), \U \rangle}) \leq e^{-y_t \langle \nabla g_t(\W_t), \U \rangle}
\end{equation*}
To upper-bound the right hand side, we lower-bound $y_t \langle \nabla g_t(\W_t), \U \rangle$. By definition of $\U$, we have
\begin{equation}\label{eq:1st}
y_t \langle \nabla g_t(\W_t), \U \rangle = y_t \langle \nabla g_t(\W_t), \W_1 \rangle + \lambda y_t \langle \nabla g_t(\W_t), \V \rangle
\end{equation}
We bound each of the terms separately. The first term can be decomposed as follows:
\begin{align}
y_t \langle \nabla g_t(\W_t), \W_1 \rangle &= y_t \langle \nabla g_t(\W_1), \W_1 \rangle + y_t \langle \nabla g_t(\W_t) - \nabla g_t(\W_1), \W_1 \rangle \nonumber \\
&\geq -\abs{y_t g_t(\W_1)} - \abs{y_t \langle \nabla g_t(\W_t) - \nabla g_t(\W_1), \W_1 \rangle}
\end{align}
By Lemma~\ref{lem:small_init}, the first term on right hand side is lower-bounded by $-\abs{g_t(\W_1)}\geq -\sqrt{2\ln{6T/\delta}}$. We bound the second term as follows:
\begin{align}
| y_t \langle \nabla g_t(\W_t) &- \nabla g_t(\W_1), \W_1 \rangle | = \abs{ \frac{y_t}{\sqrt{m}}\sum_{r=1}^{m}a_rb_{r,t}(\bI\{\w_{r,t}^\top\x_t > 0\}-\bI\{\w_{r,1}^\top\x_t > 0\})\w_{r,1}^\top\x_t } \nonumber\\
&\leq    \frac{1}{\sqrt{m}} \sum_{r\in R_t } \abs{a_r b_{r,t} \langle \w_{r,1}, \x_t \rangle } \tag{Triangle inequality}\\
&\leq   \frac{1}{\sqrt{m}} \sum_{r\in R_t } \abs{ \langle \w_{r,t} - \w_{r,1}, \x_t \rangle } \tag{$r\in R_t $}\\
&\leq   \frac{\abs{R_t } \|\w_{r,t} - \w_{r,1}\|}{\sqrt{m}} \nonumber\\
&\leq   \frac{49 \lambda^2 }{4\gamma^2 \sqrt{m}} + \sqrt\frac{49 \lambda^2\ln{3T/\delta}}{8\gamma^2 m} \tag{Lemma~\ref{lem:lazy}} \nonumber\\
&\leq   \frac{\lambda \gamma }{2} \label{eq:2nd}
\end{align}
where the last inequality holds when $m \geq \max\{98 \gamma^{-4} \ln{3T/\delta}, 2401 \gamma^{-6} \lambda^2\} = 2401 \gamma^{-6} \lambda^2$.
The second term in Equation~\ref{eq:1st} is bounded as follows:
\begin{align} 
y_t \langle \nabla g_t(&\W_t), \V \rangle =   y_t \langle \nabla g_t(\W_1), \V \rangle +   y_t \langle \nabla g_t(\W_t) - \nabla g_t(\W_1), \V \rangle \nonumber\\
&\geq y_t \langle \nabla g_t(\W_1), \V \rangle -   \abs{ y_t \langle \nabla g_t(\W_t) - \nabla g_t(\W_1), \V \rangle} \nonumber\\
&= y_t g_t^{(1)}(\V) -   \abs{ \frac{y_t}{\sqrt{m}}\sum_{r=1}^{m}a_rb_{r,t}(\bI\{\w_{r,t}^\top\x_t > 0\}-\bI\{\w_{r,1}^\top\x_t > 0\})\langle \frac{1}{\sqrt{m}}a_r\psi(\w_{r,1}),\x_t\rangle } \nonumber\\
&\geq \gamma - \sqrt{\frac{2\ln{3T/\delta}}{m}} - \frac{1}{m} \sum_{r\in R_t } \abs{a_r b_{r,t} \langle \psi(\w_{r,1}), \x_t\rangle } \tag{Lemma~\ref{lem:good_init}}  \nonumber\\
&\geq \gamma - \sqrt{\frac{2\ln{3T/\delta}}{m }}  - \frac{  \abs{R_t }}{ m} \nonumber\\
&\geq \gamma - \sqrt{\frac{2\ln{3T/\delta}}{m}}  - \frac{7 \lambda}{2\gamma \sqrt{m}} - \sqrt\frac{\ln{3T/\delta}}{2m} \tag{Lemma~\ref{lem:lazy}} \nonumber\\
&\geq \gamma - \frac{\gamma^2}{7} - \frac{\gamma^2}{14} - \frac{\gamma^2}{14} = \gamma - \frac{2\gamma^2}{7} \geq \frac{5\gamma}{7} \label{eq:3rd}
\end{align}
where the penultimate inequality holds when $m \geq \max\{98 \gamma^{-4} \ln{3T/\delta}, 2401 \gamma^{-6} \lambda^2\} = 2401 \gamma^{-6} \lambda^2$. Plugging Equations~\eqref{eq:2nd} and~\eqref{eq:3rd} in Equation~\ref{eq:1st}, we get that
\begin{equation}\label{eq:margin_lb}
y_t \langle \nabla g_t(\W_t), \U \rangle \geq     - \sqrt{2\ln{6T/\delta}}  + \frac{3\lambda \gamma}{14} \geq \ln{\frac{2 \eta T}{\lambda^2}},
\end{equation}
where the inequality hold true for $\lambda :=  5\gamma^{-1}\ln{2\eta T}+\sqrt{44 \gamma^{-2}\ln{6T/\delta}}$. Thus, we have that 
\begin{align*}
L^{(t)}_t(\U) = \log(1+e^{-y_t \langle \nabla g_t(\W_t), \U \rangle}) &\leq \frac{\lambda^2}{2 \eta T}.
\end{align*}

We now prove by induction that $\|\w_{r,t}-\w_{r,1}\|\leq\frac{7 \lambda}{2\gamma\sqrt{m}}$ holds throughout dropout training. First, we show that the claim holds for $t=2$:
\begin{align*}
\|\w_{r,2} - \w_{r,1}\| = \| \Pi_c(\eta \frac{\partial L_t(\B_1 \W_1)}{\partial \w_{r,1}} ) \| &\leq \| \eta \frac{\partial L_t(\B_1 \W_1)}{\partial \w_{r,1}}  \|  \\
&\leq \|\eta \ell'(y_t f_t(\B_1\W_1))y_i\frac{\partial f_t(\B_1\W_1)}{\partial \w_{r,1}}  \| \\
&\leq \frac{\eta}{\ln{2} \sqrt{m}} \leq \frac{7 \lambda}{2\gamma\sqrt{m}} \tag{$\eta\leq \ln{2}$},
\end{align*}
which proves the basic step. Now by induction hypothesis, we assume that the claim holds for all $k \in [t]$, i.e., it holds that $\|\w_{r,k}-\w_{r,1}\| \leq \frac{7 \lambda}{2\gamma \sqrt{m}}$. Therefore, it holds that $\|\w_{r,k}\|\leq \|\w_{r,1}\| + \|\w_{r,k} - \w_{r,1}\| \leq c - 1 + 1 \leq c$, where we used the triangle inequality, the fact that $\|\w_{r,1}\| \leq c-1$, and that $m\geq 2401\gamma^{-6}\lambda^2$. This, in particular, means that all iterates $1<k\leq t$ remain in $\cW_c$:
\begin{equation}\label{eq:no_projection}
\W_{k} = \Pi_c(\W_{k-\frac{1}{2}}) = \W_{k-\frac12} \text{ for all } 1<k\leq t.
\end{equation}
For the $t+1$-th iterate, we first upper-bound the distance from initialization in terms of the $ Q$ function:
\begin{align*}
\|\w_{r,t+1} - \w_{r,1}\| &= \|\Pi_c(\w_{r,t} - \eta \frac{\partial L_t(\W_t)}{\partial \w_{r,t}}) - \w_{r,1}\| \\
&\leq \|\w_{r,t} - \eta \frac{\partial L_t(\W_t)}{\partial \w_{r,t}} - \w_{r,1}\| \\
&\leq \|\eta \frac{\partial L_t(\W_t)}{\partial \w_{r,t}}\| + \|\w_{r,t} - \w_{r,1}\| \\
&\leq \sum_{k=1}^{t} \|\eta \frac{\partial L_k(\W_k)}{\partial \w_{r,k}}\| \\
&\leq \eta \sum_{k=1}^{t} - \ell'(y_k g_k(\W_k)) \|y_k\frac{\partial g_k(\W_k)}{\partial \w_{r,k}} \| \nonumber\\
&\leq \frac{\eta}{\sqrt{m}} \sum_{k=1}^{t}- \ell'(y_k g_k(\W_k))
\end{align*}
The idea is to turn the right hand side above into a telescopic sum using the identity $\W_{k+1}-\W_k =\W_{k+\frac12}-\W_k = \eta \ell'(y_k g_k(\W_k)) y_k \nabla g_k(\W_k), \ k \in [t-1]$. By induction hypothesis, for all $k\in[t]$, Equation~\eqref{eq:3rd} guarantees $y_k \langle \nabla g_k(\W_k),\V\rangle \geq \frac{5\gamma}{7}$. Thus, multiplying the right hand side of~\eqref{eq:dist1} by $\frac{7}{5\gamma} y_k \langle \nabla g_k(\W_k),\V\rangle$, we get that:
\begin{align}
\|\w_{r,t+1} - \w_{r,1}\| &\leq \frac{7 \eta}{5 \gamma \sqrt{m} } \sum_{k=1}^{t} - \ell'(y_k g_k(\W_k)) y_k \langle \nabla g_k(\W_k),\V\rangle \nonumber\\
&= \frac{7}{5 \gamma \sqrt{m} } \sum_{k=1}^{t}\langle \eta \nabla L_k(\W_k),\V\rangle \nonumber \\
&= \frac{7}{5 \gamma \sqrt{m} } \langle \W_{t+\frac{1}{2}}-\W_1,\V\rangle \tag{Equation~\eqref{eq:no_projection}}\\
&= \frac{7\langle \W_{t+\frac{1}{2}}-\U,\V\rangle + 7\langle \U-\W_1,\V\rangle}{5 \gamma \sqrt{m}} \nonumber\\
&\leq \frac{7\| \W_{t+\frac{1}{2}}-\U\|_F \|\V\|_F + 7\langle \lambda\V,\V\rangle}{5 \gamma \sqrt{m}} \tag{Cauchy-Schwarz} \nonumber\\
&\leq \frac{7\| \W_{t+\frac{1}{2}}-\U\|_F +  7\lambda}{5 \gamma \sqrt{m} } \label{eq:dist1}
\end{align}
Again by induction hypothesis, Equation~\eqref{eq:distance_ub} and Equation~\eqref{eq:margin_lb} hold for all $k\in[t]$, which are used to bound $\|\W_{t+\frac{1}{2}}-\U\|_F$ as follows:
\begin{align}
\|\W_{t+\frac{1}{2}}-\U\|_F^2 &\leq \|\W_{t} - \U \|_F^2 -2 \eta (L_t(\W_t) - L^{(t)}_t(\U)) + \eta L_t(\W_t) \tag{Equation~\eqref{eq:distance_ub}}\\
&\leq \|\W_{t} - \U \|_F^2 +2 \eta L^{(t)}_t(\U)) \nonumber\\
&\leq \|\W_1 - \U\|_F^2 + 2\eta\sum_{k=1}^{t}L^{(k)}_k(\U) \nonumber\\
&\leq \|\lambda\V\|_F^2 + 2 \eta t\frac{\lambda^2}{2 \eta T} \tag{Equation~\ref{eq:margin_lb}} \nonumber\\
&\leq \lambda^2 + \frac{\lambda^2 t}{T} \tag{$\|\V\|_F\leq 1$} \nonumber\\
&\leq 2\lambda^2  \nonumber\\
\implies \|\W_{t+\frac{1}{2}}-\U\|_F &\leq \sqrt{2}\lambda \label{eq:dist2}
\end{align}
Plugging Equation~\eqref{eq:dist2} back in Equation~\eqref{eq:dist1}, we arrive at:
\begin{equation*}
\|\w_{r,t+1}-\w_{r,1}\| \leq \frac{7\sqrt{2}\lambda + 7\lambda}{5 \gamma\sqrt{m}} \leq \frac{7 \lambda}{2\gamma\sqrt{m}}
\end{equation*}
Which completes the induction step and the proof.
\end{proof}

A crucial step in giving generalization bounds in expectation via upper-bounding the logistic
\begin{wrapfigure}{h}{0.4\linewidth}
\centering
\vspace*{-10pt}
\includegraphics[width=0.4\textwidth]{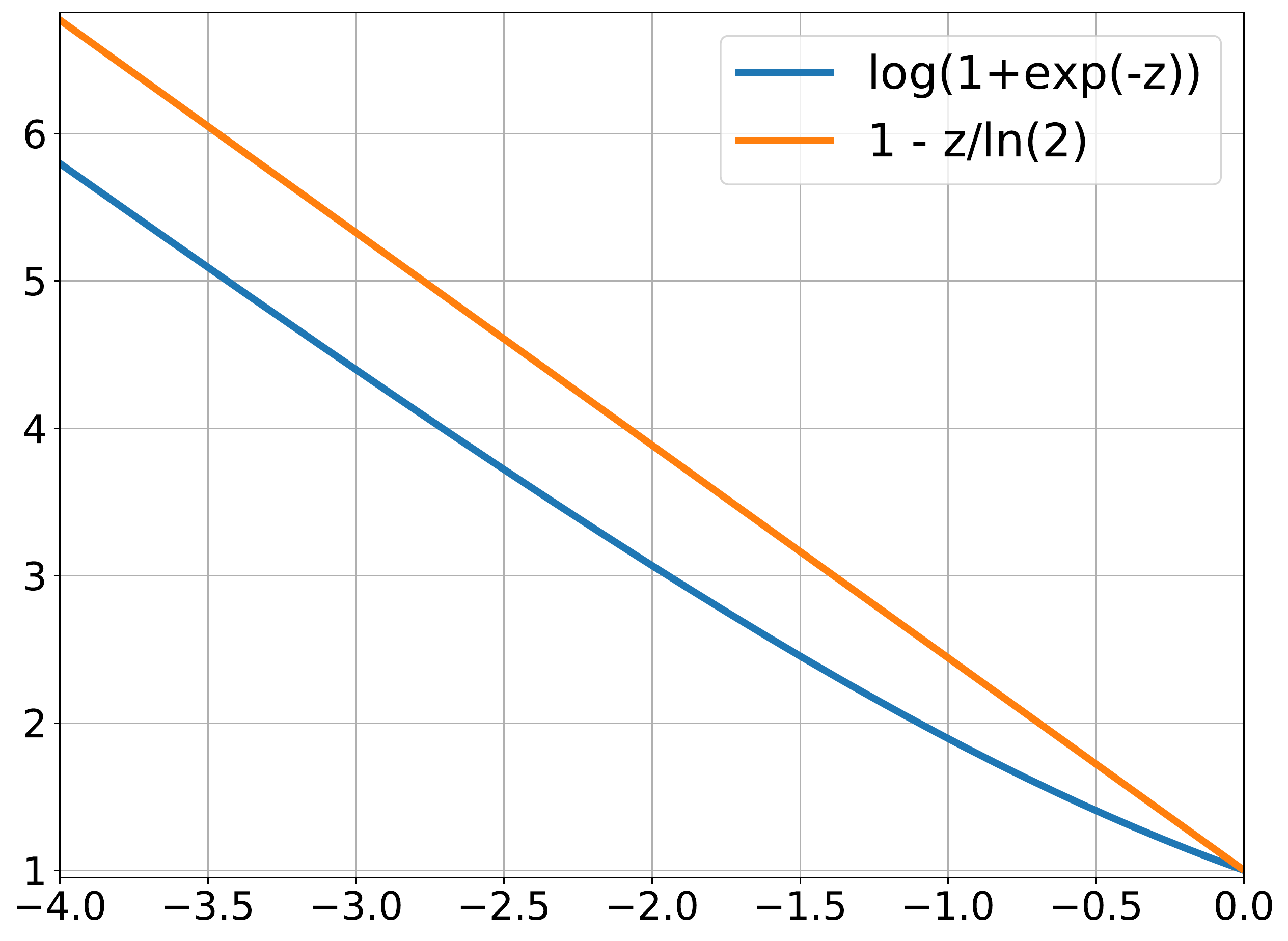}
\label{fig:ub}
\vspace*{-25pt}
\end{wrapfigure}
loss is to control the maximum value the loss can take on any iterate of the algorithm. In particular, we need to upper-bound the instantaneous loss of $g^{(t)}_t(\U)$, which appears in the right hand side of Lemma~\ref{lem:lyapunov}. To that end, we note that the logistic loss only grows linearly for $z < 0$. More formally, it holds for all $z < 0$ that:
\begin{equation}\label{eq:ub_linear}
\log(1+e^{-z}) \leq \frac{-z}{\ln{2}} + 1
\end{equation}
as depicted in Figure~\ref{fig:ub}.

\begin{lemma}\label{lem:worst_case}
Under Algorithm~\ref{alg:dropout}, it holds with probability one for all iterates that $L^{(t)}_t(\U) \leq \frac{c\sqrt{m}}{\ln{2}}+1$.
\end{lemma}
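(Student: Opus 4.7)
The plan is to argue that $L^{(t)}_t(\U)=\ell(y_t\langle\nabla g_t(\W_t),\U\rangle)$ is at worst linear in the magnitude of the inner product, and that this magnitude can be controlled with probability one by the max-norm constraint on $\U$ and the universal bound $\|\nabla g_t(\W)\|_F\leq 1$ already listed in Section~\ref{sec:preliminaries}.

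First I would split into two cases according to the sign of the argument. If $y_t\langle\nabla g_t(\W_t),\U\rangle\geq 0$, then $\ell(\cdot)\leq \log 2 \leq 1$ and the claim follows trivially. In the nontrivial case $y_t\langle\nabla g_t(\W_t),\U\rangle<0$, I would apply Equation~\eqref{eq:ub_linear} (the linear upper bound on the logistic loss over the negative axis depicted in Figure~\ref{fig:ub}) to obtain
\begin{equation*}
L^{(t)}_t(\U) \leq \frac{-y_t\langle\nabla g_t(\W_t),\U\rangle}{\ln 2} + 1 \leq \frac{|\langle\nabla g_t(\W_t),\U\rangle|}{\ln 2}+1.
\end{equation*}

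Next I would invoke Cauchy--Schwarz in Frobenius inner product to factor $|\langle\nabla g_t(\W_t),\U\rangle|\leq \|\nabla g_t(\W_t)\|_F\,\|\U\|_F$. The first factor is bounded by one, uniformly in $\W_t$ and $\B_t$, by the identity $\|\nabla g_t(\W)\|_F^2\leq \|\B_t\|_F^2/m\leq 1$ already recorded at the end of Section~\ref{sec:preliminaries}. The second factor is controlled by the max-norm hypothesis on $\U$: since $\U\in\cW_c$ (which is the standing assumption in Lemma~\ref{lem:lyapunov} under which this bound is needed), each row of $\U$ has $\ell_2$-norm at most $c$, so $\|\U\|_F\leq c\sqrt{m}$. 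Combining these yields $|\langle\nabla g_t(\W_t),\U\rangle|\leq c\sqrt{m}$, and the claimed bound $\frac{c\sqrt{m}}{\ln 2}+1$ follows.

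There is essentially no obstacle here; the lemma is a worst-case, deterministic fallback (as opposed to the high-probability argument in Lemma~\ref{lem:loss_ub}) whose sole purpose is to ensure that the per-iterate loss is polynomially bounded so that a confidence parameter can later be tuned to convert the high-probability bound into an expectation bound. The only point worth double-checking is that the specific $\U=\W_1+\lambda\V$ used in the analysis indeed lies in $\cW_c$: this follows from the triangle inequality together with the standing event $\max_r\|\w_{r,1}\|\leq 2\sqrt{\ln m}$, the bound $\|\v_r\|\leq 1/\sqrt{m}$ from $\|\psi(\cdot)\|\leq 1$, and the prescribed value of $c$ in Theorem~\ref{thm:overall}.
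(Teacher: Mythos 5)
Your proposal is correct and follows essentially the same route as the paper: bound the argument of the loss by $c\sqrt{m}$ in absolute value (the paper does this row-by-row via the triangle inequality and per-row Cauchy--Schwarz with $\|\w_{r,1}\|\leq c-1$, $\|\v_r\|\leq 1/\sqrt{m}$, and $\lambda\leq\sqrt{m}$, while you apply a single global Cauchy--Schwarz with $\|\nabla g_t(\W_t)\|_F\leq 1$ and $\|\U\|_F\leq c\sqrt{m}$ --- an immaterial difference), then invoke the linear upper bound on the logistic loss over the negative axis. Your closing remark about verifying $\U=\W_1+\lambda\V\in\cW_c$ is exactly the ingredient the paper uses implicitly through the per-row bound $\|\w_{r,1}+\lambda\v_r\|\leq c$.
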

\begin{proof}[Proof of Lemma~\ref{lem:worst_case}]
Recall that $L^{(t)}_t(\U) = \ell(y_t g^{(t)}_t(\U))$. First we bound the argument inside the loss function:
\begin{align*}
\abs{y_t g^{(t)}_t(\W_t)} &= \abs{y_t \langle \nabla g_t(\W_t), \U \rangle}\\
&\leq \sum_{r=1}^{m} \abs{\langle \frac{\partial g_t(\W_t)}{\partial \w_{r,t}}, \u_r \rangle} \tag{triangle inequality}\\
&\leq \sum_{r=1}^{m} \| \frac{\partial g_t(\W_t)}{\partial \w_{r,t}}\| \| \w_{r,1} + \lambda \v_r \| \tag{Cauchy-Schwarz} \\
&\leq \sum_{r=1}^{m} \frac{c-1+\lambda/\sqrt{m}}{\sqrt{m}} \tag{$\|\w_{r,1}\| \leq c-1$, $\|\v_r\|\leq 1/\sqrt{m}$}\\
&\leq c\sqrt{m}\tag{$\lambda \leq \sqrt{m}$}
\end{align*}
Now using Equation~\eqref{eq:ub_linear}, we get that
\begin{align*}
L^{(t)}_t(\U)=  \log(1+e^{-y_t \langle \nabla g_t(\W_t), \U \rangle }) \leq \log(1+\exp(c\sqrt{m})) \leq \frac{c\sqrt{m}}{\ln{2}}+1.
\end{align*}
\end{proof}

\begin{proof}[Proof of Theorem~\ref{thm:overall}] Note that $\W_t$ is conditionally independent from $\x_t$ given $\x_1,\ldots,\x_{t-1}$. Thus, 
\begin{align*}
\E_{\cS_T}[L_t(\W_t)] = \E_{\cS_{t-1}}[\E_{(\x_t,y_t)} \ell(y_t f_t(\W_t))|\cS_{t-1}] = \E_{\cS_{t-1}}[L(\W_t)]
\end{align*}
Using the fact that logistic loss upper-bounds the zero-one loss, taking expectation over initialization, taking average over iterates, and using Lemma~\ref{lem:expected}, we get that:
\begin{align*}
\E_{\W_1,\a,\cS_{T}}[\frac{1}{T}\sum_{t=1}^{T}\cR(q \W_t)] &\leq \E_{\W_1,\a,\cS_T}[\frac{1}{T}\sum_{t=1}^{T}\ell(y_t f_t(q \W_t))] \tag{$\bI\{z<0\} \leq \ell(z)$} \\
&\leq \E_{\W_1,\a,\cS_T,\cB_T}[\frac{1}{T}\sum_{t=1}^{T}L_t(\W_t)] \tag{Lemma~\ref{lem:expected}}\\
&\leq \frac{\E_{\W_1}[\| \W_1 - \U \|_F^2]}{2\eta T} + \frac{2}{T}\sum_{t=1}^{T}\E_{\W_1,\a,\cS_T,\cB_T}[L^{(t)}_t(\U)] \tag{Lemma~\ref{lem:lyapunov}}
\end{align*}
The first term is upper-bounded by $\frac{\lambda^2}{2\eta T}$ since $\| \W_1 - \U \|_F^2 = \| \W_1 - \W_1 - \lambda \V \|_F^2 = \lambda^2 \|\V \|_F^2 \leq \lambda^2$. Bounding the second term is based on the following two facts:
\begin{enumerate}
\item By Lemma~\ref{lem:loss_ub}, with probability at least $1-\delta$, it holds that $L^{(t)}_t(\U) \leq \frac{\lambda^2}{2 \eta T} =: u_1$.
\item By Lemma~\ref{lem:worst_case}, it holds with probability one that $L^{(t)}_t(\U) \leq \frac{c\sqrt{m}}{\ln{2}}+1 \leq 2c\sqrt{m} =: u_2$.
\end{enumerate}
Therefore, the expected value of $L^{(t)}_t(\U)$ can be upper-bounded as:
\begin{align*}
\E[L^{(t)}_t(\U)] \leq (1-\delta) u_1 + \delta u_2 \leq \frac{\lambda^2}{2 \eta T}  + 2 \delta  c \sqrt{m}
\end{align*}
Choosing $\delta := \frac{1}{4 \eta c\sqrt{m} T}$ guarantees that 
\begin{align*}
\E[L^{(t)}_t(\U)] \leq \frac{\lambda^2}{2 \eta T}  + \frac{1}{2 \eta T} \leq \frac{\lambda^2}{\eta T},
\end{align*}
where $\lambda :=  5\gamma^{-1}\ln{2\eta T}+\sqrt{44 \gamma^{-2}\ln{24 \eta c\sqrt{m} T^2}}$.
\end{proof}

\begin{proof}[Proof of Theorem~\ref{thm:sub-network}]
First, recall the following property of the logistic loss:
\begin{equation*}
\bI\{z < 0\} \leq -2\ln{2}\ell'(z) \leq 2\ln{2}\ell(z)
\end{equation*}
which implies that $\cR(\W_t;\B_t) \leq 2\ln{2} Q(\W_t;\B_t)$, where $Q(\W; \B):=\E_\cD[-\ell'(y g(\W;\x,\B)]$ is the expected value of the negative derivative of the logistic loss. On the other hand, taking the empirical average over the training data, and using Lemma~\ref{lem:lyapunov} and Lemma~\ref{lem:loss_ub}, we conclude that the following holds with probability at least $1-\delta$:
\begin{align*}
\frac{1}{T}\sum_{t=1}^{T}Q_t(\W_t) &\leq \frac{1}{T}\sum_{t=1}^{T} L_t(\W_t)\\
&\leq \frac{\|\W_1 - \U \|_F^2}{\eta T} + \frac2T\sum_{t=1}^{T}  L^{(t)}_t(\U)  \tag{Lemma~\ref{lem:lyapunov}} \\
&\leq  \frac{\lambda^2}{\eta T} + \frac{2}{T}\sum_{t=1}^{T}\frac{\lambda^2}{2\eta T} \tag{Lemma~\ref{lem:loss_ub}} \\
&\leq \frac{2\lambda^2}{\eta T}.
\end{align*}
Given the dropout masks $\cB_T$, since $Q(\W_t;\B_t) = \E_\cD[Q_t(\W_t)]$, we know that $\sum_{t=1}^{T} Q(\W_t;\B_t) - \sum_{t=1}^{T}Q_t(\W_t)$ is a martingale difference with respect to the past observations, $\cS_{T-1}$. We next show that the average of $Q_t(\W_t)$ on the right hand side above is close to the average of $Q(\W_t;\B_t)$, using Theorem~\ref{thm:martingale}, similar to Lemma~4.3. of \cite{ji2019polylogarithmic}. First, this martingale difference sequence is bounded almost surely as $R:=1/\ln{2} \geq Q(\W_t;\B_t) - Q_t(\W_t)$, simply because $0\leq -\ell'(z)\leq 1/\ln{2}$. The conditional variance can be bounded as:
\begin{align*}
V_t &:= \sum_{t=1}^{T}\E[ (Q(\W_t;\B_t)-Q_t(\W_t))^2 | \cS_{t-1}] \\
&= \sum_{t=1}^{T} Q(\W_t;\B_t)^2 -2Q(\W_t;\B_t)\E[Q_t(\W_t)| \cS_{t-1}] + \E[Q_t(\W_t)^2| \cS_{t-1}] \\
&\leq \sum_{t=1}^{T} \E[Q_t(\W_t)^2| \cS_{t-1}] \tag{$\E[Q_t(\W_t)| \cS_{t-1}] = Q(\W_t;\B_t)$} \\
&\leq \frac{1}{\ln{2}} \sum_{t=1}^{T} \E[Q_t(\W_t)| \cS_{t-1}] \tag{$0\leq Q_t(\W_t) \leq 1/\ln{2}$} \\
&= \frac{1}{\ln{2}}\sum_{t=1}^{T}Q(\W_t;\B_t)
\end{align*}
Thus, using Theorem~\ref{thm:martingale} with $R\leq 1/\ln{2}$ and $V_t \leq \sum_{t=1}^{T}Q(\W_t;\B_t)/\ln{2}$, we conclude that with probability at least $1-\delta$ it holds that
\begin{align*}
&\sum_{t=1}^{T} Q(\W_t;\B_t) - \sum_{t=1}^{T}Q_t(\W_t) \leq (e-2)\sum_{t=1}^{T}Q(\W_t;\B_t) + \frac{\ln{1/\delta}}{\ln{2}} \\
\implies &  \frac{1}{T}\sum_{t=1}^{T} Q(\W_t;\B_t) \leq \frac{4}{T}\sum_{t=1}^{T}Q_t(\W_t) + \frac{4\log(1/\delta)}{T}
\end{align*}
Plugging the above back in $\cR(\W_t;\B_t) \leq 2\ln{2} Q(\W_t;\B_t)$, and averaging over iterates we have:
\begin{equation*}
\frac{1}{T}\sum_{t=1}^{T}\cR(\W_t;\B_t) \leq \frac{16\ln{2}\lambda^2}{\eta T} + \frac{8\ln{2}\ln{1/\delta}}{T}
\end{equation*}
which completes the proof.
\end{proof}

\end{document}